\newcommand{\ourmethod}{\textsc{Loci}\xspace}
\newcommand{\ourmethodmle}{\ensuremath{\textsc{Loci}_{\text{M}}}\xspace}
\newcommand{\ourmethodhsic}{\ensuremath{\textsc{Loci}_{\text{H}}}\xspace}
\newcommand{\ourtitle}{On the Identifiability and Estimation of Causal Location-Scale Noise Models}
\newcommand{\urltocode}{\url{https://github.com/AlexImmer/loci}}
\newif\ifalternativeversion
\newif\ifnotblind
\def\1{\bm{1}}
\def\vzero{{\bm{0}}}
\def\vtheta{{\bm{\theta}}}
\def\veta{{\bm{\eta}}}
\def\vphi{{\bm{\phi}}}
\def\vpsi{{\bm{\psi}}}
\def\valpha{{\bm{\alpha}}}
\def\vxi{{\bm{\xi}}}
\def\vf{{\bm{f}}}
\def\vn{{\bm{n}}}
\def\vw{{\bm{w}}}
\def\vx{{\bm{x}}}
\def\vy{{\bm{y}}}
\def\mI{{\bm{I}}}
\def\mPhi{{\bm{\Phi}}}
\def\mPsi{{\bm{\Psi}}}
\newcommand{\inv}{^{-1}}
\newcommand{\hadamard}{\circ}
\DeclareMathAlphabet{\mathsfit}{\encodingdefault}{\sfdefault}{m}{sl}
\SetMathAlphabet{\mathsfit}{bold}{\encodingdefault}{\sfdefault}{bx}{n}
\newcommand{\gauss}{\mathcal{N}}
\newcommand{\E}{\mathbb{E}}
\newcommand{\R}{\mathbb{R}}
\newcommand{\diag}{\operatorname{diag}}
\definecolor{lightgray}{gray}{0.85}
\definecolor{lightlightgray}{gray}{0.9}
\definecolor{C1}{HTML}{1F77B4}
\definecolor{C2}{HTML}{FF7F0E}
\definecolor{C3}{HTML}{2CA02C}
\definecolor{C4}{HTML}{D62728}
\definecolor{C5}{HTML}{9467BD}
\definecolor{Cg}{HTML}{7F7F7F}
\colorlet{C1light}{C1!90!white}
\colorlet{C2light}{C2!90!white}
\colorlet{C3light}{C3!90!white}
\colorlet{C4light}{C4!90!white}
\colorlet{C5light}{C5!90!white}
\colorlet{Cglight}{Cg!90!white}
\colorlet{C1lighter}{C1!40!white}
\colorlet{C2lighter}{C2!40!white}
\colorlet{C3lighter}{C3!40!white}
\colorlet{C4lighter}{C4!40!white}
\colorlet{C5lighter}{C5!40!white}
\colorlet{C1vlight}{C1!20!white}
\colorlet{C2vlight}{C2!20!white}
\colorlet{C3vlight}{C3!20!white}
\colorlet{C4vlight}{C4!20!white}
\colorlet{C5vlight}{C5!20!white}
\newcommand{\oset}[3][0ex]{%
  \mathrel{\mathop{#3}\limits^{
    \vbox to#1{\kern-2\ex@
    \hbox{$\scriptstyle#2$}\vss}}}}
\newcommand{\transpose}{^\mathrm{\textsf{\tiny T}}}
\DeclarePairedDelimiterX{\infdivx}[2]{(}{)}{%
  #1\;\delimsize\|\;#2%
}
\DeclareMathAlphabet{\mathbcal}{OMS}{cmsy}{b}{n} %
\newcommand{\fModels}{\ensuremath{\mathcal{F}}\xspace}
\newcommand{\SN}{\ensuremath{T}\xspace}
\newcommand{\Sn}{\ensuremath{i}\xspace}
\newcommand{\Indep}{\mathop{\perp\!\!\!\perp}\nolimits}
\DeclareMathOperator{\I}{\rm{I}}
\newtheorem{theorem}{Theorem}
\newtheorem{assumption}{Assumption}
\newtheorem{definition}{Definition}
\newenvironment{proof}{\paragraph{Proof:}}{\hfill$\square$}
\tikzset{node/.style={black, draw=black, circle, minimum size=0.7cm, scale=0.8}} 
\tikzset{dummy/.style={black, draw=black, circle, minimum size=0.55cm, scale=0.7}} 
\tikzset{latent/.style={black, draw=black, fill=lightgray, circle, minimum size=0.55cm, scale=0.7}} 
\tikzset{causes/.style={->,very thick,  color=black}} 
\tikzset{causesxor/.style={->,very thick, dashed, color=black}} 
\tikzset{causesxoro/.style={o->,very thick, dashed, color=black}} 
\tikzset{connected/.style={o-o,very thick, color=black}} 
\tikzset{connectedd/.style={o-o,very thick, dashed,  color=black}} 
\tikzset{ocauses/.style={o->,very thick,  color=black}} 
\tikzset{confounder/.style={<->,very thick,  color=black}} 
\tikzset{confounderxor/.style={<->,very thick, dashed, color=black}} 
\tikzset{confounderl/.style={<->,very thick,  color=black, bend left=45}} 
\tikzset{confounderr/.style={<->,very thick,  color=black, bend right=45}} 
\let\svthefootnote\thefootnote
\newcommand\blfootnote[1]{%
  \let\thefootnote\relax%
  \footnotetext{#1}%
  \let\thefootnote\svthefootnote%
}
\pgfplotsset{compat=1.18}
\icmltitlerunning{\ourtitle}
\begin{document}

\twocolumn[
\icmltitle{\ourtitle}

\icmlsetsymbol{equal}{*}

\begin{icmlauthorlist}
\icmlauthor{Alexander Immer}{eth-cs,mpi-is}
\icmlauthor{Christoph Schultheiss}{eth-stat}
\icmlauthor{Julia E Vogt}{eth-cs}
\icmlauthor{Bernhard Sch{\"o}lkopf}{eth-cs,mpi-is}\\
\icmlauthor{Peter B{\"u}hlmann}{eth-stat}
\icmlauthor{Alexander Marx}{eth-cs,eth-ai}
\end{icmlauthorlist}

\icmlaffiliation{eth-cs}{Department of Computer Science, ETH Zurich, Switzerland}
\icmlaffiliation{mpi-is}{Max Planck Institute for Intelligent Systems, T\"ubingen, Germany}
\icmlaffiliation{eth-stat}{Seminar for Statistics, ETH Zurich, Switzerland}
\icmlaffiliation{eth-ai}{AI Center, ETH Zurich, Switzerland}

\icmlcorrespondingauthor{Alexander Immer}{alexander.immer@inf.ethz.ch}
\icmlcorrespondingauthor{Alexander Marx}{alexander.marx@inf.ethz.ch}

\icmlkeywords{Causality, Causal Discovery, Location-Scale, Heteroscedasticity, Linear Models}

\vskip 0.3in
]

\printAffiliationsAndNotice{}  %

\begin{abstract}
  We study the class of location-scale or heteroscedastic noise models (LSNMs), in which the effect $Y$ can be written as a function of the cause $X$ and a noise source $N$ independent of $X$, which may be scaled by a positive function $g$ over the cause, i.e., $Y = f(X) + g(X)N$. Despite the generality of the model class, we show the causal direction is identifiable up to some pathological cases. To empirically validate these theoretical findings, we propose two estimators for LSNMs: an estimator based on (non-linear) feature maps, and one based on neural networks. Both model the conditional distribution of $Y$ given $X$ as a Gaussian parameterized by its natural parameters.
  When the feature maps are correctly specified, we prove that our estimator is jointly concave, and a consistent estimator for the cause-effect identification task. Although the the neural network does not inherit those guarantees, it can fit functions of arbitrary complexity, and reaches state-of-the-art performance across benchmarks.
\end{abstract}

\section{Introduction}

Distinguishing cause from effect, given only observational data, is a fundamental problem in many natural sciences such as medicine or biology, and lies at the core of causal discovery. Without any prior knowledge, distinguishing whether $X$ causes $Y$ ($X \to Y$), or whether $Y$ causes $X$ ($Y \to X$) is unattainable~\citep{pearl:00:models}. When assuming a properly restricted structural causal model (SCM), however, the true graph that generated the data is identifiable~\citep{peters:12:ifmoc}. In its most general form, a structural causal model expresses the effect $Y$ as a function of the cause $X$ and an independent noise term $N$, that is, $Y = f(X,N)$.

There exists a vast literature that derived assumptions under which restricted SCMs are identifiable. That is, there exists no backward model which fulfills the same modeling assumptions as the assumed forward model. For simple linear additive noise models (ANMs), for example, it has been shown that if the data are non-Gaussian, then there exists no backward model, such that the cause can be modeled as a linear function of the effect and an additive independent noise term~\citep{shimizu:06:lingam}. Instead, for all such backward models, the noise will depend on the effect. Apart from linear models, identifiability results have been established for non-linear ANMs~\citep{hoyer:09:nonlinear,buhlmann:14:cam},
where $Y = f(X) + N$ and post non-linear (PNL) noise models~\citep{zhang:09:post-nonlinear,zhang:15:loglikelihood}, where $Y = f_2(f_1(X) + N)$. Besides identifiability, it has also been shown that consistent estimators for ANMs exist~\citep{kpotufe:14:consistency-anm}.

\begin{figure}[t!]
\centering
\includegraphics{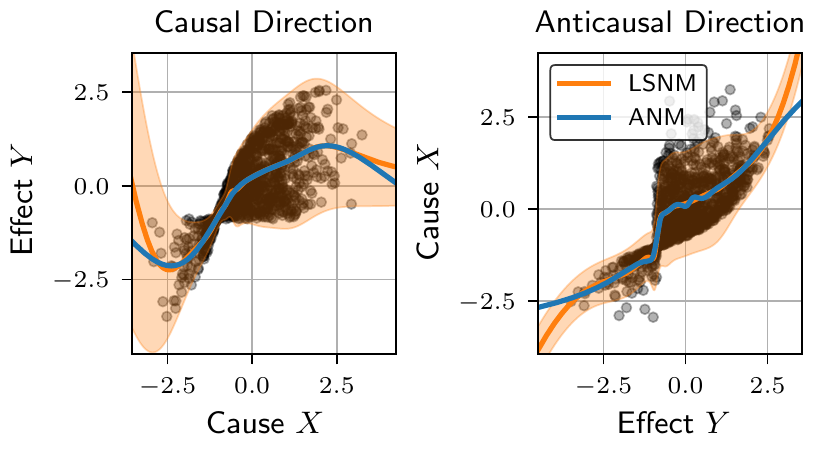}
\vspace{-2.5em}
\caption{Location-scale and additive noise model fits on MNU pair 55. 
Both models show a similar mean regression but the LSNM can model the scale of the variance.
Therefore, the LSNM correctly identifies the causal direction with a log-likelihood difference of $\approx 0.155$ while the ANM identifies the wrong direction with difference $\approx -0.001$.}
\label{fig:lsnm-example}
\vspace{-0.5em}
\end{figure}

Here, we focus on location-scale noise models (LSNMs) or heteroscedastic noise models,\!\footnote{Heteroscedastic noise refers to the setting where the variance of the noise is non-constant and  \emph{depends} on the value of $X$. To emphasize that we model $N$ as an independent source scaled by a function over $X$, we use the notion of location-scale noise.} where the effect $Y$ is expressed as $Y = f(X) + g(X)N$. LSNMs generalize the classical ANM setting by allowing the noise source $N$ to be scaled by a positive function $g(x)$. Naturally, if $g(x) = 1$, an LSNM simplifies to an ANM. In Fig.~\ref{fig:lsnm-example}, we provide an example from a synthetic benchmark~\citep{tagasovska:20:bqcd}, in which the ground truth model follows an LSNM. The ANM-based approach not only identifies the wrong direction, but also, the causal and anti-causal model have near identical log-likelihoods. Thus, the decision is merely a coin flip. The LSNM model, however, identifies the true direction with high confidence. Simply put, we can learn both $f(x)$ and $g(x)$ for the causal direction and can thus separate the independent noise source from $X$, whereas in the anti-causal direction we cannot find such a pair of functions. Hence, the estimated noise remains skewed.

The previous example assumes that we already know the answer to the following questions: Under which assumptions are LSNMs \emph{identifiable}? How and under which assumptions can we \emph{consistently estimate $f(x)$ and $g(x)$}? How can we combine these results to arrive at a score that allows us to \emph{identify cause and effect from observational data}?

\paragraph{Contributions} In this paper, we address each of the questions above. In Sec.~\ref{sec:identifiability}, we formally define LSNMs and show that apart from some pathological cases---which are known for the special case of Gaussian noise~\citep{khemakhem:21:autoregressive}---no backward model exists.\!\footnote{Concurrent work comes to a similar conclusion~\citep{strobl:22:grci}, however, we provide a different proof and clarify the implications of the result further.} 

In Sec.~\ref{sec:consistent-estimation}, we study the estimation of LSNMs under the Gaussian noise assumption, which relates to the problem of maximizing the conditional log-likelihood of $Y$ given $X$ via heteroscedastic regression. Typical estimators aim to learn $f(x)$ as the mean of the Gaussian and $g(x)$ as its variance, which leads to a non-concave likelihood function. We instead propose to relate $f(x)$ and $g(x)$ to the natural parametrization of a Gaussian, with corresponding jointly concave log-likelihood. 
Further, we show that we can consistently estimate LSNMs via (non-linear) feature maps if they are correctly specified. 
To allow for a more general class of functions, we propose a second estimator based on neural networks (NNs). Due to overparametrization, the NN-based estimator does not have guarantees. In practice, however, it reaches state-of-the-art performance, outperforming the variant based on feature maps.

To use our estimators for cause-effect inference, we propose \ourmethod (\textbf{lo}cation-scale \textbf{c}ausal \textbf{i}nference) in Sec.~\ref{sec:ce-inference}. Concretely, we propose a variant of \ourmethod via log-likelihood estimation, extending the approach of \cite{zhang:09:post-nonlinear} to LSNMs, and implement a variant of \ourmethod following the RESIT approach (regression with subsequent independence testing)~\citep{peters:14:resit}. For both strategies, we show under which assumptions consistent estimation of cause and effect is attainable with our estimator based on feature maps.

We evaluate all variants of \ourmethod on standard cause-effect benchmark datasets and observe that \ourmethod achieves state-of-the-art overall performance and almost perfect accuracy on datasets which are in line with our assumptions---i.e., Gaussian ANMs and LSNMs.

\ifnotblind
For reproducibility, we make our code publicly available\footnote{\urltocode} and provide all proofs in the Supplementary Material.
\else
\fi

\section{Identifiability of LSNMs}
\label{sec:identifiability}

In this section, we focus on the identifiability of location-scale noise models (LSNMs). A causal model is said to be identifiable under a set of structural constraints, if only the forward (causal) model is well specified and no backward model fulfilling these structural constraints exists.

To formally analyze this problem, we first need to define our assumed causal model.

\begin{definition}[Location-Scale Noise Model]
\label{def:lsnm}
Given two independent random variables $X$ and $N_Y$. If the effect $Y$ is generated by a location-scale noise model, we can express $Y$ as an SCM of the form
\begin{equation}
    Y := f(X) + g(X) N_Y \; ,
\end{equation}
where $f{:} \, \mathcal X \to \mathbb{R}$ and $g{:} \, \mathcal X \to \mathbb{R}_{+}$, i.e.~$g$ is strictly positive.
\end{definition}

We provide an illustration of data generated by a location-scale noise model in Fig.~\ref{fig:lsnm-example}.
LSNMs simplify to ANMs when $g(X)$ is constant, and to multiplicative noise models when $f(X)$ is constant.

To prove identifiability of such a restricted SCM, it is common to derive an ordinary differential equation (ODE), which needs to be fulfilled such that a backward model exists, see e.g.~\citet{hoyer:09:nonlinear}, or \citet{zhang:09:post-nonlinear}. Intuitively, the solution space of such an ODE specifies all cases in which the model is non-identifiable, leaving all specifications which do not fulfill the ODE as identifiable. In the following theorem, we derive such a differential equation for LSNMs and discuss its implications.

\begin{restatable}{theorem}{thidentifiability}\label{th:identifiability}
Assume the data is such that a location-scale noise model can be fit in both directions, i.e.,
\begin{align*}
Y &= f\left(X\right) + g\left(X\right)N_Y, \quad X \Indep N_Y\\
X &= h\left(Y\right) + k\left(Y\right)N_X, \quad Y \Indep N_X.
\end{align*} 
Let $\nu_1\left(\cdot\right)$ and $\nu_2\left(\cdot\right)$ be the twice differentiable log densities of $Y$ and $N_X$ respectively. For compact notation, define 
\begin{align*}
\nu_{X\vert Y}\left(x\vert y\right)&=\log\left(p_{X\vert Y}\left(x\vert y\right)\right) \\
&=\log\left(p_{N_X}\left(\dfrac{x-h\left(y\right)}{k\left(y\right)}\right)/k\left(y\right)\right)\\
&=\nu_2\left(\dfrac{x-h\left(y\right)}{k\left(y\right)}\right)-\log\left(k\left(y\right)\right) \quad \text{and}\\
G\left(x,y\right) & = g\left(x\right)f'\left(x\right)+ g'\left(x\right)\left[y-f\left(x\right)\right]
.
\end{align*}
Assume that $f\left(\cdot\right)$, $g\left(\cdot\right)$, $h\left(\cdot\right)$, and $k\left(\cdot\right)$ are twice differentiable.
Then, the data generating mechanism must fulfill the following PDE for all $x, y$ with $G\left(x,y\right) \neq 0$.
\begin{align}\label{eq:PDEalt}
& 0 = \nu_1''\left(y\right) + \dfrac{g'\left(x\right)}{G\left(x,y\right)}\nu_1'\left(y\right) + \dfrac{\partial^2}{\partial y^2}\nu_{X\vert Y}\left(x\vert y\right)+\\
& \dfrac{g\left(x\right)}{G\left(x,y\right)}\dfrac{\partial^2}{\partial y \partial x}\nu_{X\vert Y}\left(x\vert y\right)+\dfrac{g'\left(x\right)}{G\left(x,y\right)}\dfrac{\partial}{\partial y}\nu_{X\vert Y}\left(x\vert y\right).
\end{align}
\end{restatable}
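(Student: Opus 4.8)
The plan is to derive the governing PDE by writing the joint density $p_{X,Y}$ two ways and matching mixed partial derivatives of its logarithm. Concretely, under the forward model $Y = f(X) + g(X)N_Y$ with $X \Indep N_Y$, the joint log-density factors as
\begin{align*}
\log p_{X,Y}(x,y) = \log p_X(x) + \nu_{Y\vert X}(y\vert x),
\end{align*}
where $\nu_{Y\vert X}(y\vert x) = \nu_0\!\left(\tfrac{y-f(x)}{g(x)}\right) - \log g(x)$ for $\nu_0$ the log-density of $N_Y$. Under the backward model, the same joint log-density factors as $\log p_Y(y) + \nu_{X\vert Y}(x\vert y) = \nu_1(y) + \nu_{X\vert Y}(x\vert y)$. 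Equating these gives
\begin{align*}
\log p_X(x) + \nu_{Y\vert X}(y\vert x) = \nu_1(y) + \nu_{X\vert Y}(x\vert y).
\end{align*}

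**Next I would** eliminate the nuisance terms $\log p_X(x)$ and $\nu_0$. The standard trick (as in Hoyer et al. and Zhang--Hyvärinen) is to apply $\partial^2/\partial x\, \partial y$ to kill $\log p_X(x)$ and then a further $\partial/\partial y$ to kill anything depending on $x$ alone after the previous step; but here the cleaner route is to isolate $\nu_0$. Observe that $\nu_{Y\vert X}$ depends on $x$ only through $f(x), g(x)$, so $\partial_y \nu_{Y\vert X}(y\vert x) = \tfrac{1}{g(x)}\nu_0'\!\left(\tfrac{y-f(x)}{g(x)}\right)$. Differentiating the matched equation once in $y$ removes $\log p_X$, and differentiating again in $x$ yields a relation purely among $\nu_0$-derivatives (evaluated at the argument $z = (y-f(x))/g(x)$), the derivatives $f',f'',g',g''$, and derivatives of $\nu_1$ and $\nu_{X\vert Y}$. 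The crucial observation is that the combination $G(x,y) = g(x)f'(x) + g'(x)[y - f(x)]$ is exactly $-g(x)^2 \partial_x$ applied to the argument $z$, i.e. $\partial_x z = -G(x,y)/g(x)^2$, while $\partial_y z = 1/g(x)$. This lets one express $\partial_x(\text{anything depending on }x\text{ only through }z)$ as $-\tfrac{G(x,y)}{g(x)}\,\partial_y(\cdot)$. Using this to convert all $x$-derivatives of the $\nu_0$-term into $y$-derivatives, and then solving the resulting linear equation for the leftover $\nu_0''(z)$ term, one eliminates $\nu_0$ entirely. After rearranging and dividing by $G(x,y)$ (legitimate precisely on the set $G(x,y)\neq 0$), the surviving identity is \eqref{eq:PDEalt}.

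**The main obstacle** is the bookkeeping in the elimination step: one must carefully track that every occurrence of $\nu_0$ and its derivatives can be written in terms of $\partial_y$-derivatives of $\nu_{Y\vert X}$ via the chain-rule identity $\partial_x = -\tfrac{G(x,y)}{g(x)}\partial_y$ acting on $z$-dependent quantities, and that the non-$z$-dependent pieces (namely $-\log g(x)$ and its derivatives, and $\nu_1(y)$ and $\nu_{X\vert Y}$ which is already expressed through $\nu_2$) combine into exactly the stated coefficients $\tfrac{g'(x)}{G(x,y)}$ on $\nu_1'(y)$ and on $\partial_y \nu_{X\vert Y}$, and $\tfrac{g(x)}{G(x,y)}$ on $\partial_x\partial_y \nu_{X\vert Y}$. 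A subtlety worth flagging explicitly: the term $\partial_x \partial_y \nu_{X\vert Y}$ genuinely survives because $\nu_{X\vert Y}$ is \emph{not} assumed to depend on $x$ only through $z$ — it is determined by the backward model and we make no structural assumption linking it to the forward $z$. One should also note where the twice-differentiability hypotheses on $f,g,h,k$ and on $\nu_1,\nu_2$ are used (to make all the second-order mixed partials well-defined and interchangeable), and remark that the restriction to $G(x,y)\neq 0$ is exactly what is needed to divide through; the locus $G \equiv 0$ corresponds to the known pathological/degenerate cases and is handled separately in the subsequent discussion.
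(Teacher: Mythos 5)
Your proposal is correct, and the elimination you sketch does go through: writing the matched factorization $\log p_X(x) + \nu_0(z) - \log g(x) = \nu_1(y) + \nu_{X\vert Y}(x\vert y)$ with $z = (y-f(x))/g(x)$, one $\partial_y$ kills $\log p_X(x)$ and $\log g(x)$; a further $\partial_y$ and a further $\partial_x$ then give three equations in $\nu_0'(z)$ and $\nu_0''(z)$, and substituting the first two into the third via your identities $\partial_y z = 1/g(x)$ and $\partial_x z = -G(x,y)/g(x)^2$, then dividing by $-G(x,y)/g(x)$, lands exactly on Eq.~\eqref{eq:PDEalt}. This is a genuinely different route from the paper's. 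The paper instead performs the change of variables $(x,n_Y)\mapsto(y,n_X)$, computes the $2\times 2$ Jacobian with absolute determinant $g(x)/k(y)$, and imposes the linear-separability (diagonal log-density Hessian) criterion $\partial^2 \log p(x,n_Y)/\partial x\,\partial n_Y = 0$ on $\nu_1(y)+\nu_2(n_X)+\log g(x)-\log k(y)$, treating $y$ and $n_X$ as functions of $(x,n_Y)$; after dividing by $(\partial y/\partial x)(\partial y/\partial n_Y)=G$, this yields a long intermediate identity in $\nu_2$, $h$, $k$ and their derivatives which must then be checked to coincide with Eq.~\eqref{eq:PDEalt}. Your version stays in $(x,y)$ coordinates, needs neither the Jacobian computation nor the separability lemma (the two independences enter only through the two conditional factorizations of $\log p_{X,Y}$), and produces Eq.~\eqref{eq:PDEalt} directly in its stated form, at the price of explicitly eliminating the forward noise log-density $\nu_0$ by substitution --- which your chain-rule observation about $G$ handles cleanly. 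Both arguments use the hypotheses in the same places (twice-differentiability to justify the mixed partials, $G(x,y)\neq 0$ only for the final division), so the proofs are interchangeable; yours is arguably the more economical writeup, and your remark that $\partial_x\partial_y\nu_{X\vert Y}$ survives precisely because no structural link between $\nu_{X\vert Y}$ and $z$ is assumed is a point the paper leaves implicit.
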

The equality derived in Theorem~\ref{th:identifiability} is equivalent to the result concurrently provided by \citet{strobl:22:grci} up to fixing a sign error in the terms involving $g'\left(x\right)$. We derived this result independently using a different proof technique and additionally note that $p_{X\vert Y}\left(x\vert y\right)$ cannot be written as univariate function with argument $\left(\left[x-h\left(y\right)\right] / k\left(y\right)\right)$ if $Y \rightarrow X$ is an LSNM with non-constant $k\left(\cdot\right)$.

The conclusion of \citet{strobl:22:grci} is that if we have $x_0$ such that $G\left(x_0, y\right)\neq 0$ for all but countably many $y$, then knowing $\nu_{X\vert Y}\left(x_0\vert y\right)$, $G\left(x_0, y\right)\neq 0$, $g\left(x_0\right)$ and $g'\left(x_0\right)$ leads to $\nu_1\left(y\right)$ being constrained to a two dimensional affine space as \eqref{eq:PDEalt} becomes an ODE. This is in analogy to the result of \citet{hoyer:09:nonlinear} for ANMs. For this case, \citet{zhang:09:post-nonlinear} have refined the result and provide a list of all possible cases of unidentifiable models: only for specific choices of $f\left(\cdot\right)$ and $\nu_2\left(\cdot\right)$, one can find $\nu_1\left(\cdot\right)$ such that the model is invertible. 

This conclusion carries over to the LSNM. 
Assume there exist different values $x$ such that $G\left(x, y\right)\neq 0$ for all but countably many $y$. If $g\left(\cdot\right)$ is strictly positive and $f\left(\cdot\right)$ is injective, this applies to all $x \in \mathbb{R}$ except for at most countably many. Each such value leads to a different ODE in $y$ when plugging it into Eq.~\eqref{eq:PDEalt}. Only when the solution spaces of all ODEs overlap such that the same $\nu_1\left(\cdot\right)$ is found, which must also be valid log-density, the model can be invertible. This is not the case for generic combinations of $g\left(\cdot\right)$, $G\left(\cdot, \cdot\right)$ and $\nu_{X\vert Y}\left(\cdot \vert \cdot \right)$ but only for very specific exceptions. Thus, apart from some pathological cases, an LSNM cannot be invertible. A precise characterization of these cases as in \cite{zhang:09:post-nonlinear} for the post-nonlinear model, which involves the ANM as a special case, has not yet been found for LSNM to the best of our knowledge. 

To provide a bit more intuition regarding the assumptions of Theorem~\ref{th:identifiability}, note that the results only apply to random variables $X$ with unbounded support. This is implied by requiring that the log-density of $N_X$ has to be twice differentiable. For example, $X$ could not follow a uniform distribution. This also implies that $g( \cdot )$ has to be a non-linear (or constant) function since otherwise $g( \cdot )$ is negative for some attainable values of $X$ and does not strictly map to  $\mathbb{R}_+$ as required by our assumptions. 
\ifalternativeversion
Assuming that the noise variable is Gaussian, necessary conditions for the distributions of $X$ and $Y$ as well as the functions $f\left(\cdot\right)$, $g\left(\cdot\right)$, $h\left(\cdot\right)$, and $k\left(\cdot\right)$ can be found~\citep{khemakhem:21:autoregressive}. For completeness, we provide the corresponding result as Theorem~\ref{th:identifiability-autoregressive-gaussian} in Supplementary Material~\ref{app:estimation}.
\else
Assuming that $N_X$ and $N_Y$ are both Gaussian, necessary conditions for the distributions of $X$ and $Y$ as well as the functions $f\left(\cdot\right)$, $g\left(\cdot\right)$, $h\left(\cdot\right)$, and $k\left(\cdot\right)$ can be found:\footnote{We slightly changed the theorem as the original version has a typo in the definition of $g$ and $k$.}

\begin{theorem}[\citet{khemakhem:21:autoregressive}]
\label{th:identifiability-autoregressive-gaussian}
Assume the data follows the model in Def.~\ref{def:lsnm} with $N_Y$ standard Gaussian, $N_Y \sim \mathcal{N}(0,1)$. If a backward model exists, i.e.
\begin{equation}
    X = h(Y) + k(Y)N_X
\end{equation}
where $N_X \sim \mathcal{N}(0,1)$, $N_X \Indep Y$ and $k > 0$, then one of the following scenarios must hold:
\begin{enumerate}
    \item $(g,f) = \left( \frac{1}{\sqrt{Q}}, \frac{P}{Q} \right)$ and $(k,h) = \left( \frac{1}{\sqrt{Q'}}, \frac{P'}{Q'} \right)$ where $Q,Q'$ are polynomials of degree two, $Q,Q' > 0$, $P,P'$ are polynomials of degree two or less, and $p_X,p_Y$ are strictly log-mix-rational-log. In particular, $\lim_{-\infty} g = \lim_{+\infty} g = 0^+$, $\lim_{-\infty} f = \lim_{+\infty} f < \infty$, similarly so for $k,h$, and $f,g,h,k$ are not invertible.
    \item $g, k$ are constant, $f,g$ are linear and $p_X,p_Y$ are Gaussian densities.
\end{enumerate}
\end{theorem}

This concludes the section on the identifiability of LSNMs, next we focus on estimating them.
\fi

\section{Estimation of LSNMs}
\label{sec:consistent-estimation}

To separate the independent noise source $N_Y = \frac{Y-f(X)}{g(X)}$ from $Y$, we derive a consistent maximum likelihood estimator for the log-likelihood of $Y$ given $X$ and a parameter vector $\vtheta$ that models $f(x)$ and $g(x)$. 
The typical parametrization then attempts to fit $f(x)$ as the mean of a Gaussian and $g(x)$ as the standard deviation, such as iterative feasible generalized least-squares~\citep{harvey1976estimating, amemiya1985advanced}. 
However, this leads to non-concavity of the associated log-likelihood and therefore massively complicates consistency of the estimator. In particular, we would define $\theta_1(x) = \mu(x)$ and $\theta_2(x) = \sigma^2(x)$ and have the Gaussian log-likelihood $\log p(y | x, \vtheta) = \log \mathcal{N}(y| \theta_1(x), \theta_2(x))$. Then, differentiating twice with respect to $\theta_2(x)$, we have
\begin{equation}
    \frac{\partial^2 \log \mathcal{N}(y| \theta_1(x), \theta_2(x))}{\partial \theta_2(x)^2} = \frac{\theta_2(x) - 2 (\theta_1(x) - y)^2}{2 \theta_2(x)^3},
\end{equation}
which is negative if $\theta_2(x) < 2(\theta_1(x) - y)^2$ since $\theta_2(x) > 0$ as it models the variance.
Therefore, the Hessian can be indefinite and the log-likelihood cannot be guaranteed to be concave.
With a non-concave estimator, deriving any kind of consistency claims is challenging~\citep{wooldridge2015introductory}. 
Consistent estimation, however, is crucial to guarantee that an estimator can provably identify the causal direction.

To achieve consistent estimation and consequently provide an estimator that can identify the causal direction in the large-data limit, we choose to parameterize the Gaussian with its natural parameters, $\eta_1(x), \eta_2(x)$ with the inverse mapping $\mu(x) = - \tfrac{\eta_1(x)}{2 \eta_2(x)}$ and $\sigma^2(x) = - \tfrac{1}{2 \eta_2(x)}$. 
Due to properties of the natural parametrization of exponential families, it is clear that the Gaussian log-likelihood
\begin{equation}
    \small
    \label{eq:log-likelihood-nat}
    \log p(y | x, \veta(x)) {=} c + \veta(x)\transpose \begin{bmatrix} y \\ y^2 \end{bmatrix} + \frac{\eta_1(x)^2}{4 \eta_2(x)} + \log \sqrt{-2 \eta_2(x)}
\end{equation}
with $c = - \nicefrac{1}{2} \log \left( 2 \pi \right)$, is strictly concave in $\eta_1(x), \eta_2(x)$ \citep{brown1986fundamentals} with constraint $\eta_2(x)<0$. However, the parametrization of the natural parameters through learnable functions matters.

We are particularly interested in non-linear Gaussian LSNMs.
We therefore assume to know the right feature maps giving rise to ground truth $\eta_1(x)$ and $\eta_2(x)$. 
With (non-linear) feature maps $\vpsi(x)\in \R^D$, $\vphi(x) \in \R_{+}^D$, and parameters $\vw_1 \in \R^D, \vw_2 \in \R^D_+$, we have
\begin{equation}
    \eta_1(x) = \vpsi(x)\transpose \vw_1 \;\; \textrm{and} \;\; \eta_2(x) = -\vphi(x)\transpose \vw_2
    \label{eq:feature_map_model}
\end{equation}
with corresponding log-likelihood as $\log p(y|x,\vw)$.
The second natural parameter, $\eta_2(x)$, is ensured to be negative due to positive feature map and parameters.
The log-likelihood of this model is then jointly concave in $\vw_1, \vw_2$ due to the composition of a concave log-likelihood with a linear model and constraint $\vw_2 > 0$.

\begin{restatable}{lemma}{lemconcavity}\label{le:concavity}
With natural parameters modelled as $\eta_1(x) = \vpsi(x)\transpose \vw_1$ and $\eta_2(x) = -\vphi(x)\transpose \vw_2$, the Gaussian log-likelihood function in Eq.~\eqref{eq:log-likelihood-nat} is jointly concave in $\vw_1, \vw_2$.
\end{restatable}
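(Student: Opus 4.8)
The plan is to exploit the standard fact (cited in the text as \citet{brown1986fundamentals}) that the Gaussian log-likelihood in Eq.~\eqref{eq:log-likelihood-nat}, viewed as a function of the natural parameters $(\eta_1,\eta_2)$ on the open half-space $\{\eta_2 < 0\}$, is strictly concave: its negative is, up to affine terms, the log-partition function of the Gaussian exponential family, which is convex. I would therefore treat the claim as a composition statement: concavity is preserved under pre-composition with an affine map. First I would fix $x$ and write $\ell_x(\eta_1,\eta_2)$ for the right-hand side of Eq.~\eqref{eq:log-likelihood-nat} as a function of the two scalars $(\eta_1,\eta_2)$, noting that the data term $\veta(x)^{\transpose}[y,\,y^2]^{\transpose}$ is linear in $(\eta_1,\eta_2)$ and hence does not affect concavity, so only $\tfrac{\eta_1^2}{4\eta_2} + \log\sqrt{-2\eta_2}$ needs to be handled; this is exactly where strict concavity on $\{\eta_2<0\}$ comes in. Then I would observe that the map $(\vw_1,\vw_2)\mapsto(\eta_1(x),\eta_2(x)) = (\vpsi(x)^{\transpose}\vw_1,\,-\vphi(x)^{\transpose}\vw_2)$ is linear in $(\vw_1,\vw_2)$, so $(\vw_1,\vw_2)\mapsto \ell_x(\vpsi(x)^{\transpose}\vw_1,-\vphi(x)^{\transpose}\vw_2)$ is concave as the composition of a concave function with a linear map.

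The one genuine subtlety is the domain constraint: the concavity of $\ell_x$ only holds on $\{\eta_2<0\}$, so I must check that the linear pre-image lands there. Since $\vphi(x)\in\R_+^D$ and we restrict to $\vw_2\in\R_+^D$, we have $\eta_2(x) = -\vphi(x)^{\transpose}\vw_2 \le 0$, with strict negativity whenever $\vphi(x)$ has a positive entry against a positive weight; one either assumes $\vphi(x)\ne 0$ and $\vw_2$ in the open positive orthant, or notes the boundary is a measure-zero degenerate case and phrases concavity on the relevant open convex set. I would state this explicitly so the composition argument applies on a genuine convex domain. Finally, to conclude \emph{joint} concavity in $(\vw_1,\vw_2)$ — not just coordinatewise — I would emphasize that the affine-precomposition argument operates on the full vector $(\vw_1,\vw_2)\in\R^D\times\R_+^D$ at once, and that summing/integrating $\ell_x$ over the data (or over the distribution of $X$) preserves concavity since a nonnegative combination of concave functions is concave; hence $\log p(y\mid x,\vw)$ and the resulting empirical log-likelihood are jointly concave in $(\vw_1,\vw_2)$.

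The main obstacle is not analytical depth but bookkeeping: making sure the positivity assumptions on $\vphi$ and $\vw_2$ are invoked exactly where needed to keep $\eta_2(x)$ in the open region where the exponential-family concavity result is valid, and being careful that "linear'' (homogeneous) versus "affine'' does not matter here since the additive data term is itself affine in the parameters. If one wanted a self-contained argument instead of citing \citet{brown1986fundamentals}, one could alternatively compute the Hessian of $\ell_x$ in $(\eta_1,\eta_2)$ and verify negative definiteness directly on $\{\eta_2<0\}$ — a short $2\times 2$ calculation — and then invoke the chain rule $\nabla^2_{\vw}\,\ell_x\circ A = A^{\transpose}(\nabla^2\ell_x)A \preceq 0$ for the linear map $A$; I would mention this as the fallback but would not grind through it, since the exponential-family fact makes it unnecessary.
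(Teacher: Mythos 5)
Your proposal is correct and follows essentially the same route as the paper: concavity of the Gaussian log-likelihood in its natural parameters composed with the linear map $(\vw_1,\vw_2)\mapsto(\vpsi(x)\transpose\vw_1,-\vphi(x)\transpose\vw_2)$, with the paper simply carrying out your ``fallback'' explicit $2\times 2$ Hessian computation (via the leading-minor criterion) rather than resting on the citation to \citet{brown1986fundamentals}. Your added care about keeping $\eta_2(x)$ strictly negative on the open positive orthant is a legitimate refinement that the paper handles only informally.
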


The result relies primarily on the concavity of exponential family distributions in their natural form and has been used in a similar context for heteroscedastic Gaussian processes~\citep{le:05:heteroscedastic}.
In contrast, the commonly used iterative feasible generalized least-squares (FGLS) method for heteroscedastic regression is formulated with a non-concave loss. %
For example, \citet{cawley2004heteroscedastic} use a similar formulation using feature maps but their model is not jointly concave because they do not use natural parameters.

To achieve consistent estimation, we need the log-likelihood to be strictly concave (to guarantee global identification) when given a certain number of samples $T > 0$. This condition can be formulated in terms of the Fisher information, which is the expectation of the negative log-likelihood Hessian under the predictive sample for each data point.

\begin{assumption}\label{as:fisher-information}
There exists $T_0 > 0$, s.t.~for any $\SN \ge \SN_0$ the Fisher information matrix of the Gaussian log-likelihood in Eq.~\eqref{eq:log-likelihood-nat} parameterized as in Eq.~\eqref{eq:feature_map_model}, i.e.,
\begin{equation}\label{eq:empirical-fisher}
I_\SN(\vw) = - \frac{1}{\SN} \sum_{\Sn = 1}^\SN \E_{y \sim p(y|x_\Sn;\vw)} \left[ \nabla^2_\vw \log p(y | x_\Sn; \vw) \right]
\end{equation}
is positive definite with respect to any $\vw \in \mathbb{R}^D \times \mathbb{R}^D_+$.
\end{assumption}

In our case, the Fisher information and the Hessian coincide since we have a generalized linear model with exponential family likelihood~\citep[Sec.~9.2 in][]{martens2020new}. 
Therefore, the Fisher information is guaranteed to be at least positive semi-definite according to Lemma~\ref{le:concavity}.
In particular, each summand in Eq.~\eqref{eq:empirical-fisher} will have rank two.
Thus, our assumption essentially requires that, upon observing enough data points, these summands span different sub-spaces of $\R^{2D}$ such that their sum is full rank.
For sensible feature maps and non-degenerate distributions on $x$, we expect this assumption to hold.
This is trivially the case for $D=2$, which could, for example, be a linear model with $\phi(x)=x$ and $\psi(x)=|x|$, because each summand is already full rank. 
\begin{restatable}{theorem}{thconsistency}\label{th:consistent}
Let $\{(y_i,x_i)\}_{i=\Sn,\dots,\SN}$ be an iid sample with conditional density $p(y|x_\Sn;\vw^*)$ as defined through Eqs.~\eqref{eq:log-likelihood-nat} and~\eqref{eq:feature_map_model}, with $\vw^* \in \mathbb{R}^D \times \mathbb{R}^D_+$, and correctly specified (non-linear) feature maps $\vpsi(x)\in \R^D$ and $\vphi(x) \in \R_{+}^D$.
Further, there exists a $\SN_0 > 0$ for which Assumption~\ref{as:fisher-information} holds, and suppose the usual smoothness criteria hold:
\begin{enumerate}[i)]
    \item\label{as:th-smoothness-1} Derivatives up to second order can be passed under the integral sign in $\int dP(y | x, \vw)$.
    \item\label{as:th-smoothness-2} Third derivatives $\nabla^3 \log p(y | x, \vw)$ are bounded by a function $M(y|x)$ on $\mathbb{R}^D \times \mathbb{R}^D_+$, s.t.
    \[
    \sup_{\theta \in \mathbb{R}^D \times \mathbb{R}^D_+} | \nabla^3 \log p(y | x, \vw)_{jkl} | \le M(y|x) 
    \]
    for all $j,k,l$, and $\mathbb{E}[M(y|x)] < \infty$.
\end{enumerate}
Then, for $\SN \ge \SN_0$ the maximum likelihood estimate $\hat{\vw}_\SN$ is weakly consistent, i.e., for $\SN \to \infty$, $\hat{\vw}_\SN \stackrel{p}{\to} \vw^*$. Further, $\hat{\vw}_\SN$ is asymptotically normal in a sense that $\sqrt{\SN}(\hat{\vw}_\SN - \vw^*) \stackrel{d}{\to} \mathcal{N}(0,I(\vw^*)^{-1})$.
\end{restatable}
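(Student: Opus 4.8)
The plan is to follow the classical route to consistency and asymptotic normality of a maximum-likelihood estimator for a correctly specified model, the only non-standard feature being that the parameter set $\mathbb{R}^D\times\mathbb{R}^D_+$ is not compact; the joint concavity of Lemma~\ref{le:concavity} is exactly what makes this harmless. Write $\ell_\SN(\vw)=\tfrac1\SN\sum_i\log p(y_i\mid x_i;\vw)$ and $\ell(\vw)=\E_{(x,y)}[\log p(y\mid x;\vw)]$. First I would establish population identification: by correct specification the information inequality gives $\ell(\vw)\le\ell(\vw^*)$, with equality only when $p(\cdot\mid x;\vw)=p(\cdot\mid x;\vw^*)$ for $P_X$-almost every $x$; since the natural parameters are linear in $\vw$ through fixed feature maps and Assumption~\ref{as:fisher-information} rules out a degenerate design (collinear features or a too-concentrated $x$-distribution), this forces $\vw=\vw^*$. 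Equivalently, $\vw^*$ is a critical point of the concave function $\ell$ --- hence a global maximizer --- and $-\nabla^2\ell(\vw^*)=I(\vw^*)\succ0$ (the Hessian equals the Fisher information for this exponential-family GLM) makes it the unique one.

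Next, consistency. For each fixed $\vw$, $\ell_\SN(\vw)\to\ell(\vw)$ almost surely by the SLLN (the per-sample terms are i.i.d.\ with finite mean). Each $\ell_\SN$ is concave by Lemma~\ref{le:concavity}, and for $\SN\ge\SN_0$ Assumption~\ref{as:fisher-information} upgrades this to \emph{strict} concavity, so a maximizer $\hat\vw_\SN$ exists and is unique --- the natural-parameter Gaussian log-likelihood in Eq.~\eqref{eq:log-likelihood-nat} also diverges to $-\infty$ as $\vw_2$ approaches the boundary ($\eta_2\to0^-$) and as $\|\vw\|\to\infty$, which secures attainment in the interior. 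Invoking the standard convexity argument --- pointwise convergence of concave functions implies uniform convergence on compact sets, and with a unique population maximizer this forces the empirical maximizers to converge --- gives $\hat\vw_\SN\stackrel{p}{\to}\vw^*$.

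Finally, asymptotic normality. Since $\mathbb{R}^D\times\mathbb{R}^D_+$ is open, $\vw^*$ is interior, so for $\SN$ large $\hat\vw_\SN$ is interior too and $\nabla\ell_\SN(\hat\vw_\SN)=0$. Expanding the score about $\vw^*$ with integral remainder yields
\[
0=\nabla\ell_\SN(\vw^*)+\bar H_\SN(\hat\vw_\SN-\vw^*),
\]
where $\bar H_\SN=\int_0^1\nabla^2\ell_\SN\big(\vw^*+t(\hat\vw_\SN-\vw^*)\big)\,dt$. By smoothness condition~\ref{as:th-smoothness-1} the score is centered with covariance $I(\vw^*)$, so $\sqrt\SN\,\nabla\ell_\SN(\vw^*)\stackrel{d}{\to}\mathcal N(0,I(\vw^*))$ by the CLT; by the SLLN $\nabla^2\ell_\SN(\vw^*)\to-I(\vw^*)$, and smoothness condition~\ref{as:th-smoothness-2} together with $\hat\vw_\SN\stackrel{p}{\to}\vw^*$ bounds $\|\bar H_\SN-\nabla^2\ell_\SN(\vw^*)\|\le\|\hat\vw_\SN-\vw^*\|\cdot\tfrac1\SN\sum_i M(y_i\mid x_i)\stackrel{p}{\to}0$, so $\bar H_\SN\stackrel{p}{\to}-I(\vw^*)$, which is invertible. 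Slutsky's theorem then gives $\sqrt\SN(\hat\vw_\SN-\vw^*)=-\bar H_\SN^{-1}\sqrt\SN\,\nabla\ell_\SN(\vw^*)\stackrel{d}{\to}\mathcal N(0,I(\vw^*)^{-1})$.

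The main obstacle is not any single calculation but the non-compactness of the parameter space: a Wald-type consistency proof would need either compactness or explicit control that the likelihood is not maximized ``at infinity'', and it is precisely joint concavity (Lemma~\ref{le:concavity}), together with Assumption~\ref{as:fisher-information}, that substitutes for the missing compactness --- the assumption delivers strict concavity of $\ell_\SN$ for $\SN\ge\SN_0$ (hence a unique $\hat\vw_\SN$) and, passing to the limit, a positive-definite population Fisher information $I(\vw^*)$, since if $I(\vw^*)$ were singular then $I_\SN(\vw^*)$ would be singular almost surely for every $\SN$. The remaining point to handle with some care in the full write-up is confirming that the natural-parameter model attains its maximum in the interior of $\mathbb{R}^D\times\mathbb{R}^D_+$ rather than on its boundary $\eta_2\to0^-$, which follows from the $\log\sqrt{-2\eta_2}$ and $\eta_1^2/(4\eta_2)$ terms of Eq.~\eqref{eq:log-likelihood-nat}.
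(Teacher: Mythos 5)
Your proof is correct, and it reaches the same destination as the paper's but by a more self-contained route. The paper's own proof is essentially a verification of hypotheses for two cited results: global identifiability of $\vw^*$ follows from Rothenberg's theorem (convex parameter space plus everywhere positive-definite information), and consistency plus asymptotic normality then follow by invoking Cram\'er's classical theorem (in the conditional-likelihood form of Liang), whose conditions are exactly the positive-definite information and the smoothness conditions i) and ii). You instead execute the classical argument explicitly --- information inequality for identification, score expansion with integral remainder, CLT/LLN and the third-derivative envelope $M$ for normality --- and, for the consistency step, you replace Cram\'er's local argument with the concavity-based one (pointwise convergence of concave functions implies uniform convergence on compacts, hence convergence of maximizers). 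This substitution is worth highlighting: Cram\'er's theorem by itself only guarantees existence of a consistent sequence of \emph{roots} of the score equation, not consistency of the global maximizer, and it is precisely Lemma~\ref{le:concavity} that closes this gap by making the root unique; the paper relies on this implicitly, whereas your convexity argument makes it explicit and simultaneously handles the non-compactness of $\mathbb{R}^D\times\mathbb{R}^D_+$. The one point you should spell out more carefully in a full write-up is the passage from Assumption~\ref{as:fisher-information} (an empirical, design-dependent statement about $I_\SN$) to positive definiteness of the population information $I(\vw^*)$ appearing in the limit law --- your one-line argument (singularity of $I(\vw^*)$ would force almost-sure singularity of every $I_\SN$, since the per-sample contributions are positive semidefinite) is the right one, and integrability of $\log p(y|x,\vw)$ needed for the pointwise LLN deserves a sentence, but neither is a gap in substance.
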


The key assumptions for this Theorem to hold are that the feature maps are correctly specified and that they fulfill Assumption~\ref{as:fisher-information}, which we discussed above. The remaining assumptions ensure the smoothness of the log-likelihood and are needed to prove the theorem following the approach of \citet{cramer:46:methods-stats}. There exists follow-up work, which shows that these smoothness criteria can be relaxed, e.g., \citet{kulldorff:57:conditions-consistency} and \citet{wald:49:consistency}, however, we used the conditions provided by \citet{cramer:46:methods-stats} as they are more intuitive. Last, we can also achieve consistency (but not asymptotic normality) by substituting the smoothness criteria in Theorem~\ref{th:consistent} with the condition
\[
\mathbb{E}[|\log p(y|x,\vw)|] < \infty \; ,
\]
as discussed in~\citep[Ch.~15]{wooldridge2015introductory}.

In Supplementary Material~\ref{app:het_reg_algos}, we describe an algorithm to estimate the model parameters.
Due to the concavity in $\vw$, the algorithm can make use of closed-form iterations and reliably converges to a global optimum.
In general however, we expect our model to be misspecified and the underlying feature maps to be unknown.
In this case, we use approximate feature maps for our linear model. 
For example, we use spline-based feature maps in our experiments~\citep{eilers1996flexible}.
In Supplementary Material~\ref{app:estimator_benchmark}, we compare the performance of our proposed estimator to the commonly used iterative feasible generalized least squares (IFGLS) algorithm, which does not rely on a jointly concave log-likelihood formulation. We find that our estimator clearly improves upon IFGLS. Especially, for small datasets, which are common in causal inference, and large numbers of feature maps it leads to significantly less overfitting.

In the case where we do not know the true feature maps, we can alternatively use neural networks to learn them. %
Parameterized by weights $\vw \in \R^D$, we have neural network $\vf: \R \times \R^D \rightarrow \R^2$ that maps from a scalar input to the two natural parameters of the heteroscedastic likelihood.
In particular, we model
\begin{equation}
\label{eq:nn_output}
    \eta_1(x) = f_1(x, \vw) \;\; \textrm{and} \;\; \eta_2(x) = - \tfrac{1}{2} \exp (f_2(x, \vw)),
\end{equation}
which uses an additional exponential link function that ensures positivity while maintaining differentiability.
For details, see Supplementary Material~\ref{app:het_reg_nn}.

\section{Cause-Effect Inference}
\label{sec:ce-inference}

After studying identifiability and estimation of LSNMs, we now propose two strategies to instantiate our estimators: a likelihood-based estimator and an independence-based estimator. Both assume we can separate the independent noise $N_Y$ from $X, Y$. Hence, we first show under which conditions maximizing Eq.~\eqref{eq:log-likelihood-nat} is equivalent to minimizing the dependence between estimated noise $\hat{N}$
and $X$.

One way to measure the dependence between two continuous random variables is mutual information
\begin{equation}
\label{eq:diff-mi}
	\I(X;Y) = \int p(x,y) \log \frac{p(x,y)}{p(x)p(y)} dx dy \; ,
\end{equation}
which is zero iff $X$ and $Y$ are independent. Suppose there exists an invertible mapping from $(x,n_Y)^T$ to $(x,y)^T$, where the mapping function $\vf$ is defined as an SCM with model class $\fModels$, and $\vf$ can be parameterized by $\vtheta$.
For such an invertible mapping with corresponding Jacobian transformation matrix $\bm{J}_{X \to Y}$, mutual information (sample version) of $X$ and $N_Y$ can be expressed as~\citep{zhang:15:loglikelihood}
\begin{equation}
\label{eq:mi-x-noise-sample}
\I\left(\vx, \vn_Y | \vtheta \right) =  - \frac{1}{\SN} \sum_{\Sn=1}^\SN \log \left( \frac{p(x_\Sn)p(n_{Y,\Sn} | \vtheta)}{p_{\fModels}(x_\Sn,y_\Sn) |\bm{J}_{X \to Y}|} \right) \; ,
\end{equation}
where $(\vx, \vn_Y) = \{ (x_i, n_{Y,i}) \}_{i=1, \dots, \SN}$ is an iid sample of size $\SN$, $p_{\fModels}(x,y)$ is the density induced by the SCM with model class $\fModels$, and $|\bm{J}_{X \to Y}|$ is the absolute value of the determinant of $\bm{J}_{X \to Y}$. To arrive at Eq.~\eqref{eq:mi-x-noise-sample}, we exploit that
\[
p(x,n_Y) = p_{\fModels}(x,y) |\bm{J}_{X \to Y}| \; .
\]
 \citet{zhang:15:loglikelihood} note that for ANMs, $|\bm{J}_{X \to Y}| = 1$ can be ignored. For LSNMs, however, $|\bm{J}_{X \to Y}|$ evaluated at $(x,n_Y)$ is equal to $|g(x)|$ and thus, cannot be ignored.

If $p_{\fModels}(x,y)$ is induced by a Gaussian LSNM, the conditional log-density of $y$ given $x$ as defined in Eq.~\ref{eq:log-likelihood-nat} already includes the scaling through $|g(x)|$ since it can be written as $\log p(n_Y | \vtheta) - \log |g(x)|$.
Moreover, $p(x)$ and $p_{\fModels}(x,y)$ are independent of $\vtheta$. The mutual information in Eq.~\eqref{eq:mi-x-noise-sample} is thus minimized as $\log p(y | x, \vtheta)$ is maximized. Moreover, \citet{zhang:15:loglikelihood} show that
\[
\I(\vx, \hat{\vn}_Y | \hat{\vtheta}_\SN) \stackrel{p}{\to} 0
\]
as $\SN \to \infty$, for any consistent MLE $\hat{\vtheta}_\SN$. For LSNMs our estimator derived in Sec.~\ref{sec:consistent-estimation} fulfills this condition.

\subsection{Likelihood-Based Inference}
\label{sec:likelihood-based-inference}
Cause-effect inference via maximum likelihood or related measures such as through the Minimum Description Length principle or Bayesian estimators has proven to be a successful approach for cause-effect inference~\citep{mooij:10:mml,buhlmann:14:cam,marx:17:slope}. The general approach is to compare the log-likelihood estimate (or a related score) for the presumed causal direction $X \to Y$,
\begin{equation}
\label{eq:estimator-ll}
	\ell_{X \to Y}(\hat{\vtheta}_\SN) =  \sum_{i=1}^\SN \log \left( p(x_\Sn) p\left(y_\Sn | x_\Sn, \hat{\vtheta}_\SN \right) \right) \; ,
\end{equation}
to the corresponding score in the $Y \to X$ direction, i.e, $\ell_{Y \to X}(\hat{\vxi}_\SN)$.\!\footnote{We drop the arguments, whenever clear from context.} Thus, we estimate that $X$ causes $Y$ if $\ell_{X \to Y} > \ell_{Y \to X}$, $Y$ causes $X$ if $\ell_{X \to Y} < \ell_{Y \to X}$, and do not decide if both terms as equal. 

When closely inspecting Eq.~\eqref{eq:estimator-ll}, we observe---similar to the ANM case~\citep{zhang:15:loglikelihood}---that comparing $\nicefrac{1}{T} \, \ell_{X \to Y}$ to $\nicefrac{1}{T} \, \ell_{Y \to X}$ for LSNMs is equivalent to comparing $\I(\vx, \hat{\vn}_Y | \hat{\vtheta}_\SN)$ to $\I (\vy, \hat{\vn}_X | \hat{\vxi}_\SN)$ since $p_{\fModels}(x,y)$ is a constant appearing on both sides. Thus, maximum likelihood can be used to identify cause and effect for an LSNM, whenever the model is identifiable as discussed in Sec.~\ref{sec:identifiability} and the estimator is consistent (Sec.~\ref{sec:consistent-estimation}). We formalize these points in Theorem~\ref{th:consistency-mle-approach}, which is adapted from \citet{zhang:15:loglikelihood}.

\begin{restatable}{theorem}{consistencymleapproach}[\citet{zhang:15:loglikelihood}]\label{th:consistency-mle-approach}
Let $P_{\fModels}(X,Y)$ be the joint distribution induced by an LSNM according to Def.~\ref{def:lsnm} with $N_Y {\sim} \mathcal{N}(0,1)$, parametrization $\vtheta$, and consistent MLE $\hat{\vtheta}_\SN$. Given an iid sample $(\vx, \vy) \sim P_{\fModels}(X,Y)$ of size $\SN$,
\begin{equation}
\label{eq:identifiability-mle}
\lim_{\SN \to \infty} \ell_{X \to Y}(\hat{\vtheta}_\SN) - \ell_{Y \to X}(\hat{\vxi}_\SN) \ge 0 \; ,
\end{equation}
with equality, if and only if, a backward model parameterized by $\vxi$ exists according to Theorem~\ref{th:identifiability-autoregressive-gaussian}, and $\ell_{Y \to X}(\hat{\vxi}_\SN)$ converges to $\ell_{Y \to X}(\vxi)$ as $\SN \to \infty$.
\end{restatable}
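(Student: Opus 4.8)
The plan is to reduce the claimed inequality \eqref{eq:identifiability-mle} to an information-theoretic statement about mutual information, exactly as in the ANM argument of \citet{zhang:15:loglikelihood}, and then invoke the identifiability result of Theorem~\ref{th:identifiability-autoregressive-gaussian} together with the consistency of the MLE from Theorem~\ref{th:consistent}. Concretely, first I would normalize and use the decomposition established just above: for the causal direction $\tfrac{1}{\SN}\ell_{X\to Y}(\hat{\vtheta}_\SN)$ differs from $-\I(\vx,\hat{\vn}_Y\,|\,\hat{\vtheta}_\SN)$ only by the term $\tfrac{1}{\SN}\sum_\Sn \log p_{\fModels}(x_\Sn,y_\Sn)$, which is symmetric in the two directions since $p_{\fModels}(x,y)$ is the \emph{same} joint density regardless of how we factorize it. Hence
\[
\tfrac{1}{\SN}\big(\ell_{X\to Y}(\hat{\vtheta}_\SN) - \ell_{Y\to X}(\hat{\vxi}_\SN)\big)
= \I(\vy,\hat{\vn}_X\,|\,\hat{\vxi}_\SN) - \I(\vx,\hat{\vn}_Y\,|\,\hat{\vtheta}_\SN).
\]

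Next I would take the limit $\SN\to\infty$. Since the true data-generating model is a Gaussian LSNM in the $X\to Y$ direction with correctly specified feature maps, Theorem~\ref{th:consistent} gives $\hat{\vtheta}_\SN \stackrel{p}{\to}\vtheta^*$, and then the convergence $\I(\vx,\hat{\vn}_Y\,|\,\hat{\vtheta}_\SN)\stackrel{p}{\to}0$ quoted from \citet{zhang:15:loglikelihood} applies: the recovered noise is asymptotically independent of $X$. For the backward direction, mutual information is always nonnegative, so $\liminf_\SN \I(\vy,\hat{\vn}_X\,|\,\hat{\vxi}_\SN)\ge 0$, which already yields the inequality in \eqref{eq:identifiability-mle}. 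The subtle point is the characterization of the equality case. Equality forces $\I(\vy,\hat{\vn}_X\,|\,\hat{\vxi}_\SN)\to 0$, i.e.\ there must exist a pair of functions $(h,k)$, $k>0$, producing noise $N_X$ that is truly independent of $Y$ — that is precisely the statement that a valid backward LSNM with standard Gaussian noise exists, which by Theorem~\ref{th:identifiability-autoregressive-gaussian} restricts us to one of the two listed (pathological or linear-Gaussian) scenarios. Conversely, if such a backward model exists and the backward MLE $\ell_{Y\to X}(\hat{\vxi}_\SN)$ converges to the population optimum $\ell_{Y\to X}(\vxi)$, then both mutual-information terms vanish in the limit and equality holds.

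The main obstacle I expect is handling the backward-direction estimator carefully: Theorem~\ref{th:consistent} guarantees consistency only when feature maps are correctly specified, and in the anti-causal direction the conditional $p_{X|Y}$ is generally \emph{not} of the Gaussian-LSNM feature-map form (indeed the remark after Theorem~\ref{th:identifiability} stresses that $p_{X|Y}$ need not be a univariate function of $(x-h(y))/k(y)$). This is exactly why the theorem's equality clause carries the extra hypothesis "$\ell_{Y\to X}(\hat{\vxi}_\SN)$ converges to $\ell_{Y\to X}(\vxi)$" as an assumption rather than a consequence — I would make this explicit, noting that for the inequality direction we need nothing about the backward estimator beyond nonnegativity of mutual information (which holds for the sample version by Jensen/Gibbs), and only for the biconditional on equality do we invoke that convergence together with Theorem~\ref{th:identifiability-autoregressive-gaussian}. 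A secondary technical care is ensuring the sample mutual information $\I(\cdot,\cdot\,|\,\vtheta)$ in Eq.~\eqref{eq:mi-x-noise-sample} converges to its population counterpart so that "equals zero in the limit" is meaningful; this follows from a law-of-large-numbers argument on the log-density terms under the integrability condition $\mathbb{E}[|\log p(y|x,\vw)|]<\infty$ already invoked in Sec.~\ref{sec:consistent-estimation}, plus continuity of the relevant densities in $\vtheta$ near $\vtheta^*$.
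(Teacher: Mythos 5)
Your proposal is correct and follows essentially the same route as the paper's own proof: decompose both log-likelihoods into the (direction-symmetric) joint-density term minus $\SN$ times the sample mutual information, reduce the claim to comparing $\I(\vx,\hat{\vn}_Y|\hat{\vtheta}_\SN)$ with $\I(\vy,\hat{\vn}_X|\hat{\vxi}_\SN)$, send the forward term to zero via consistency and the result of \citet{zhang:15:loglikelihood}, and characterize equality by the existence of a backward model per Theorem~\ref{th:identifiability-autoregressive-gaussian}. Your added remarks on why backward-direction convergence must be assumed rather than derived, and on the law-of-large-numbers step for the sample mutual information, are careful elaborations of points the paper leaves implicit, not a different argument.
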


Clearly, if we know $p(x)$ and all assumptions in Theorem~\ref{th:consistency-mle-approach} are fulfilled, our estimator based on non-linear feature maps is suitable to consistently identify cause and effect according to Theorem~\ref{th:consistency-mle-approach}. In practice, we assume a fully Gaussian model, meaning that we also model $p(x)$ as Gaussian. It would, however, be straightforward to use a different prior.

\subsection{Independence-Based Inference}
\label{sec:indep-based-inference}
As a second approach, we apply regression with subsequent independence testing (RESIT)~\citep{peters:14:resit}.
Due to the additional degree of freedom to fit the scale of an LSNM, we expect our estimators to be beneficial in this setting.

That is, we first estimate the residuals $\hat{N}_Y = \frac{Y - \hat{f}(X)}{\hat{g}(X)}$ using one of the estimators proposed in Sec.~\ref{sec:consistent-estimation}, and subsequently test whether $\hat{N}_Y$ is independent of the presumed cause $X$. We repeat these steps for the $Y \to X$ direction and decide for that direction with the larger $p$-value; i.e., the smaller the $p$-value, the more evidence for rejecting independence. If both $p$-values are insignificant, both directions admit an LSNM, and one could decide to not force a decision.

A possible problem with such a RESIT approach is that the estimated residuals may be inherently dependent on the input~\citep{kpotufe:14:consistency-anm,mooij:16:pairs}, when we do not use a suitable estimator. As discussed by~\citet{mooij:16:pairs}, a regression estimator is suitable, if i) it is $L_2$ consistent, or ii) it is weakly consistent, but we perform sample splitting---i.e., we estimate the function parameters on one half of the data and estimate the dependence between residuals and potential cause on the other half of the data. Further, \citet[Thm.~20]{mooij:16:pairs} proved that Hilbert-Schmidt independence criterion (HSIC)~\citep{gretton:05:hsic} is consistent given such a suitable estimator.%
\!\footnote{These results have also been extended beyond the two-variable case by~\citet{pfister:18:dhsic}.} Therefore, we chose HSIC for our empirical evaluation.

$L_2$-consistency is a stronger statement than what we proved in Theorem~\ref{th:consistent}. For our estimator, we would need to show that the Fisher information is sufficiently peaked at $\vtheta$ to achieve $L_2$ consistency, which is difficult for arbitrary non-linear feature maps. Even though, our estimator is suitable for Gaussian LSNMs if we perform sample splitting, we observe a substantially improved performance when using the full dataset for training and independence testing.

\subsection{Location-Scale Causal Inference}

To instantiate the estimators above, we propose \ourmethod (\textbf{lo}cation-scale \textbf{c}ause-effect \textbf{i}nference), which follows a simple two-step procedure: We first standardize the data to avoid any bias that might be induced by the scale of the involved variables~\citep{reisach:21:beware}.\!\footnote{Since we assume a Gaussian LSNM, the marginals for $X$ and $Y$ cancel when comparing $\ell_{X \to Y}$ to $\ell_{Y \to X}$ after standardization.} Then, we use one of the estimators proposed in Sec.~\ref{sec:consistent-estimation} to estimate $f$ and $g$ for the $X \to Y$ direction, and $h$ and $k$ for the $Y \to X$ direction. Last, we determine the causal direction, either by choosing the direction with the higher likelihood, as described in Sec.~\ref{sec:likelihood-based-inference} (\ourmethodmle), or the one with the larger $p$-value provided by HSIC, as proposed in Sec.~\ref{sec:indep-based-inference} (\ourmethodhsic).

We expect the likelihood-based approach to perform best for Gaussian ANMs and LSNMs, since it is consistent in these settings, while we expect \ourmethodhsic to have an advantage when the data is non-Gaussian distributed.

\section{Related Work}
\label{sec:related}

Cause-effect inference is a well-studied problem, and the first identifiability results have been established for linear additive noise models~\citep{shimizu:06:lingam}, where the effect $Y$ can be expressed as a linear function of the cause $X$ and an additive non-Gaussian error term. It has been proven that the backward model does not exist for a broad range of ANMs~\citep{hoyer:09:nonlinear,peters:12:ifmoc,shoubo:18:mixture,peters:17:book,buhlmann:14:cam}, and consistency results for ANMs have been derived~\citep{kpotufe:14:consistency-anm}.
Further, these results on identifiability have been extended to post non-linear causal models~\citep{zhang:09:post-nonlinear} and a link to maximum likelihood estimation has been established~\citep{zhang:15:loglikelihood}.

A line of research that is not based on assumptions on the SCM builds upon the principle of independent mechanisms and postulates that the mechanism ($p_{Y|X}$) is independent of the cause ($p_X$), whereas this does not hold for the anti-causal direction~\citep{janzing:10:algomarkov,peters:17:book}. Besides being a postulate, the principle of independent mechanism can also be linked to known identifiability results, e.g., to the likelihood approach we presented in Sec.~\ref{sec:ce-inference}. For Gaussian additive noise models for example, $p_{Y|X}$ models the residual distribution which is independent of $p_X$ for correctly specified models. Note, however, that for location-scale noise models, the definition has to be revisited and we cannot consider $p_{Y|X}$ as an independent mechanism, since the independent noise source is equal to $p_{Y|X} \cdot |\bm{J}_{X \to Y}|$.\!\footnote{A related concept with identifiability results for discrete data is entropic causal inference~\citep{kocaoglu:17:entropic}.} Approaches motivated by this postulate either try to approximate the involved distributions directly~\citep{janzing:12:igci,sgouritsa:15:cure,goudet:18:learning}, or aim to approximate its information-theoretic counterpart: the algorithmic independence of conditionals postulate~\citep{janzing:10:algomarkov}, which compares the Kolmogorov complexities~\citep{kolmogorov:65:information} of the involved factorizations. Since Kolmogorov complexity is not computable, it is typically approximated via Minimum Description Length or Minimum Message Length~\citep{kalainathan:19:generative,marx:17:slope,mitrovic:18:causalkernel,mooij:10:mml,mian:21:globe}. In a broader sense, these approaches also link to (penalized) likelihood methods or Bayesian structure learning~\citep{peters:14:equal-error,buhlmann:14:cam,marx:19:sloppy,lorch:21:dibs}, which are of increasing interest, especially for graphs beyond the bivariate setting~\citep{zheng:18:notears,vowels:21:like-dags}.

\begin{figure*}[t!h!]
\centering
\includegraphics{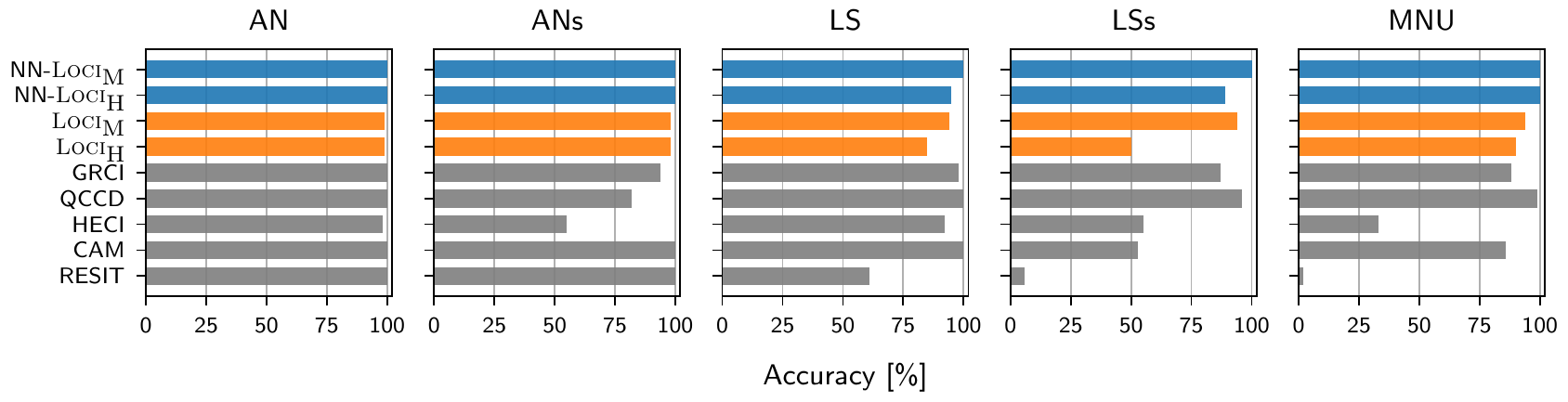}
\vspace{-2.5em}
\caption{Performance of \ourmethod with maximum likelihood and HSIC estimator using either neural network 
(\protect\tikz[baseline=-0.15ex,inner sep=0pt]{\protect\draw[line width=6pt, C1light] (0,0.05) -- ++(0.35,0)})
or linear regressor 
(\protect\tikz[baseline=-0.15ex,inner sep=0pt]{\protect\draw[line width=6pt, C2light] (0,0.05) -- ++(0.35,0)})
in comparison to the baselines 
(\protect\tikz[baseline=-0.15ex,inner sep=0pt]{\protect\draw[line width=6pt, Cglight] (0,0.05) -- ++(0.35,0)})
on benchmark datasets by \citet{tagasovska:20:bqcd}.
On these datasets, our model is either well-specified or slightly misspecified and therefore \ourmethodmle performs best.
\ourmethodmle with neural network regressor makes no error on these datasets while other methods designed for similar data misclassify some pairs.
}
\label{fig:anlsmnu_barplot}
\vspace{-0.5em}
\end{figure*}

For location-scale noise models, several independent identifiability results exist. \textsc{HECI}~\citep{xu:22:heci} and \textsc{FOM}~\citep{cai:20:fourth-order} are two proposals, which extend the identifiability results for ANMs~\citep{blobaum:18:reci}, which focus on the low-noise regime. In particular, \citet{xu:22:heci} develop an approach that partitions the domain space and develop a loss based on the geometric mean between error terms, and \citet{cai:20:fourth-order} suggest a score based on fourth-order moments, which they estimate via heteroscedastic Gaussian process regression. Approaches without identifiability results have also shown strong empirical performance on location-scale benchmark datasets, such as \textsc{QCCD}~\citep{tagasovska:20:bqcd}, based on non-parametric quantile regression, and estimators for conditional divergences~\citep{fonollosa:19:challenge-winner,duong:21:cdci}. Most related to our approach is the concurrent work of~\citep{strobl:22:grci}, who prove Theorem~\ref{th:identifiability} independently of us via a different technique as discussed in Sec.~\ref{sec:identifiability}. \citet{strobl:22:grci}, however, focus on root cause analysis and use a heuristic two stage algorithm, \textsc{GRCI}, to learn the mean and variance via cross validation. Subsequently, they estimate the degree of dependence via mutual information.

In contrast to previous work, we not only provide identifiability results for LSNMs, but also bridge the gap and provide the first consistent estimator for causal LSNMs. 
Further, we find that our LSNM maximum-likelihood estimator can outperform iterative feasible generalized least-squares~\citep[IFGLS,][]{harvey1976estimating}, the de-factor standard for heteroscedastic models, especially on small data~(cf.~Supplementary Material~\ref{app:estimator_benchmark}).
Additionally, we provide a neural network-based approach to learn LSNMs, which exhibits state-of-the-art performance on a variety of benchmark tasks.

\section{Experiments}

We empirically compare \ourmethod to state-of-the-art bivariate causal inference methods and study the benefit of modelling location-scale noise as well as a post-hoc independence test.
The performance of bivariate causal inference methods is assessed in terms of accuracy and area under the decision rate curve (AUDRC).\!\footnote{We choose AUDRC over AUPR and AUROC, since it weights correctly identified $X \to Y$ pairs in the same way as correctly identified $Y \to X$ pairs and thus avoids an arbitrary selection of true positives and true negatives, which can lead to non-interpretable results on unbalanced datasets~\citep[Table~1]{marx:19:sloper}.}
The accuracy measures the fraction of correctly inferred cause-effect relationships and the AUDRC measures how well the decision certainty indicates accuracy.
The certainty is, for example, indicated by the likelihood or $p$-value difference in both directions.
Thus, a high AUDRC indicates that an estimator tends to be correct when it is certain and only incorrect when it is uncertain.
Mathematically, given the ground truth direction $t(\cdot)$ and a causal estimator for the direction $f(\cdot)$ on $M$ pairs with ordering $\pi(\cdot)$ according to the estimator's certainty, we have
\begin{equation}
    \label{eq:audrc}
    \text{AUDRC} = \tfrac{1}{M} {\textstyle \sum_{m=1}^M \tfrac{1}{m} \sum_{i=1}^{m}} {1}_{f(\pi(i)) = t(\pi(i))}.
\end{equation}
That is, we average the accuracy when iteratively adding the pair the estimator is the most certain about.
Hence, both accuracy and AUDRC are between $0$ and an optimum of $1$.

Overall, we find that \ourmethod performs on par with the best methods for causal pair identification and can achieve perfect accuracy even when the model assumptions are slightly violated.
In the case of additive noise models, our estimators perform on par with custom estimators for these cases.
In the case of location-scale noise models, our method performs often better than previously proposed estimators for this purpose.
We also find that the location-scale noise model estimators we designed greatly improve over corresponding additive noise model estimators with the same structure.
Across the $13$ benchmarks we consider, our method ranks first and improves over concurrent methods for causal inference of location scale noise models like GRCI~\citep{strobl:22:grci} and HECI~\citep{xu:22:heci}.

\paragraph{Baselines.}
We consider CAM~\citep{buhlmann:14:cam}, which is based on homoscedastic maximum likelihood and a corresponding approach based on additional independence tests~\citep[RESIT]{peters:14:resit}.
Further, we compare to three methods that are also designed for heteroscedasticity:
QCCD~\citep{tagasovska:20:bqcd}, which uses quantile regression, HECI~\citep{xu:22:heci}, which uses binning, and GRCI~\citep{strobl:22:grci}, which uses non-linear regression and cross-validation based scores. In Supplementary Material~\ref{app:experiments}, we add CGNN~\cite{goudet:18:learning} and ICGI~\cite{janzing:12:igci} as non-LSNM baselines.

\paragraph{Datasets.}
To assess the performance of our method on datasets where the model assumptions are only slightly violated, we consider the five synthetic datasets proposed by \citet{tagasovska:20:bqcd} that consist of additive (AN, ANs), location scale (LS, LSs), and multiplicative (MNU) noise-models. 
The datasets with suffix `s' use an invertible sigmoidal function making identification more difficult. 
Further, we assess the performance of our approach on a wide variety of common benchmarks where our assumptions are most likely violated.
We consider the Net dataset constructed from random neural networks, the Multi datasets using polynomial mechanisms and various noise settings (including post non-linear noise), and the Cha dataset used for the effect pair challenge~\citep{guyon2019cause}.
Further, we consider the SIM and T{\"u}bingen datasets of the benchmark by \citet{mooij:16:pairs}.

\subsection{Additive, Location-Scale, and Multiplicative Noise}
The five datasets proposed by \citet{tagasovska:20:bqcd} can be modelled using the location-scale noise model that we assume~(Def.~\ref{def:lsnm}) and therefore provide an optimal test case for our methods and theoretical results.
MNU additionally provides an interesting test-case because it is not generated with Gaussian noise.
In Fig.~\ref{fig:anlsmnu_barplot}, we display the accuracy of our estimators using both a linear model with feature maps and a neural network.
We find that our estimators, especially \ourmethodmle relying on the maximum likelihood, achieve almost-perfect performance on this benchmark.
In the following, we compare the performances of the methods in detail.

\paragraph{Comparison to Methods for Additive Noise Models.}
The direct comparison to CAM~\citep{buhlmann:14:cam} and RESIT~\citep{peters:14:resit} is relevant because these two methods rely on additive noise models and can be seen as equivalents to \ourmethodmle and \ourmethodhsic, respectively.
Fig.~\ref{fig:anlsmnu_barplot} shows that our method with a neural network estimator, NN-\ourmethodmle, achieves perfect performance across the five datasets while the method based on feature maps performs only slightly worse.
In particular, \ourmethodmle performs better than the homoscedastic variant CAM and \ourmethodhsic performs significantly better than RESIT.
In the case where the assumptions seem to be aligned with our estimators, no independence test is needed and \ourmethodmle performs better than \ourmethodhsic.
In Fig.~\ref{fig:improvement}, we further show how the LSNM estimators that we propose improve over an ANM estimator using the same feature maps or neural network architecture.
Especially on the three datasets that have complex noise models, LS, LSs, and MNU, our estimators greatly improve the performance.
Across all benchmarked datasets, ANM estimators perform the worst on these three and an additional independence test can further decrease the performance below $10\%$ accuracy. 
Instead, using our LSNM estimators leads to almost perfect accuracy with or without the independence test.

\begin{figure}
    \centering
    \includegraphics{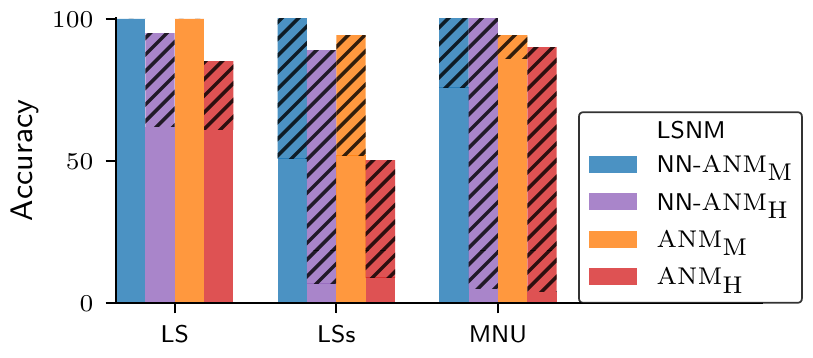}
    \vspace{-2em}
    \caption{Improvement of using our location-scale noise model (LSNM) estimators (striped area) over corresponding additive noise model (ANM) estimators (solid area) with the same network architectures and feature maps.
    The ANM estimators use a Gaussian likelihood with fixed observation noise.
    Only on LS, the likelihood-based ANM estimators are on par with the LSNM variant ($\textrm{ANM}_\textrm{M}$ performing 3\% points better than \ourmethodmle on LS).}
    \label{fig:improvement}
\end{figure}

\paragraph{Comparison to location-scale estimators.}
Although QCCD~\citep{tagasovska:20:bqcd}, GRCI~\citep{strobl:22:grci}, and HECI~\citep{xu:22:heci} are designed in particular for heteroscedastic models, they do not achieve perfect performance on this benchmark like NN-\ourmethodmle.
In particular, these methods struggle on additive noise models with sigmoid non-linearity (ANs) and on datasets with the assumed location-scale noise model (LS) while our proposed method, in particular with a flexible neural network, achieves perfect performance across all these cases.

\subsection{Performance on other Datatsets}
We assess the performance of our method on 8 other benchmark datasets where the LSNM assumptions might be violated.
Despite this, we find that the proposed methods perform on par or better than existing state-of-the-art methods for bivariate causal inference.
In particular our NN-\ourmethodhsic estimator performs best in aggregate accuracy over all benchmarks followed by GRCI~\citep{strobl:22:grci}.
In turn, GRCI~\citep{strobl:22:grci} performs slightly better in terms of AUDRC followed by our method.
Overall, our method performs best on $7$ out of the $13$ benchmarks considered.
The detailed results for all figures can be found in Tables~\ref{app:tab:accuracy} and \ref{app:tab:audrc} in the Supplementary Material.
In comparison to the five benchmarks where our assumptions hold approximately, the additional independence test greatly helps improve performance on the other 8 benchmark sets.

Overall, the results on various benchmarks suggest that \ourmethodmle should be used if one expects the Gaussian noise assumptions to hold.
If the assumptions are likely violated, \ourmethodhsic should be preferred.

\begin{figure}
    \centering
    \includegraphics{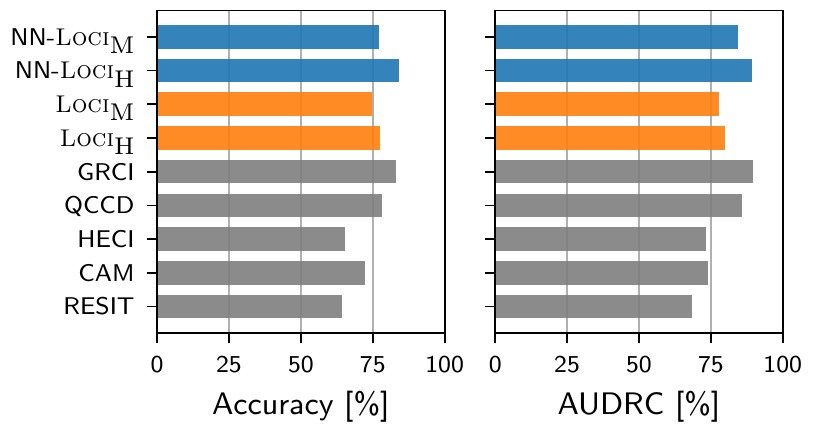}
    \vspace{-2em}
    \caption{Average accuracy and area under the decision rate curve (AUDRC) over all $13$ benchmark datasets. \ourmethod with a neural network performs best overall in terms of accuracy, GRCI in terms of AUDRC.}
    \vspace{-0.5em}
    \label{fig:overall_perf}
\end{figure}

\section{Conclusion}

We considered the problem of cause-effect inference in location-scale noise models from observational data. We proved that the causal direction is identifiable for LSNMs except for some pathological cases, and proposed two empirical estimators for this setting: a concave feature map-based estimator and a neural network-based approach. For cause-effect inference, we instantiated both variants in a likelihood-based framework, as well as performed subsequent independence testing. The likelihood approach has almost perfect accuracy on benchmark data that approximately meets our assumptions, whereas the independence-based approach is more stable when the data is non-Gaussian. Overall, the neural network instantiation has a slight edge over the feature map-based estimator. The latter, however, outperforms comparable methods such as IFGLS and might be of independent interest. 
For future work, it would be interesting to inverstigate the non-Gaussian setting in more detail.

\ifnotblind
\section*{Acknowledgements}
A.I. gratefully acknowledges funding by the Max Planck ETH Center for Learning Systems (CLS). A.M.~is supported by the ETH AI Center with a postdoctoral fellowship. Travel support for A.M.~was provided by ETH Foundations of Data Science (ETH-FDS) at ETH Zurich.
\else
\fi

\bibliography{abbreviations,bib-paper}
\bibliographystyle{icml2023}

\newpage
\appendix
\onecolumn

\section{Heteroscedastic Regression Algorithms}
\label{app:het_reg_algos}

Here we describe the estimators we use for maximum likelihood.
We assume the data are vectors $\vx \in \R^\SN$ and observations $\vy \in \R^\SN$.
Further, we place an uninformative Gaussian prior on the parameter vector $\vw \sim \gauss(\vzero, \delta\inv \mI)$.
Assuming conditional independence, the log joint with parameters $\vw$ is
\begin{equation}
    \log p(\vy, \vw | \vx) = \sum_{\Sn=1}^\SN \log p(y_\Sn | x_\Sn, \vw) + \log \gauss(\vw; \vzero, \delta\inv \mI) \; .
\end{equation}
This ensures that the log joint $\log p(\vy, \vw | \vx)$ is strictly concave for our concave estimator and helps with optimization.
It can further be used to adjust the complexity of the model by adapting $\delta$.
In our case, we use a very small $\delta$ with value $10^{-6}$ which does not regularize but rather helps inverting the Hessian for closed-form updates described below.

\subsection{Concave Linear Heteroscedastic Regression}
\label{app:het_reg_linear}
Optimizing the parameters of the concave heteroscedastic regression model with fixed feature maps can partly be done in a closed-form and partly requires gradient-based optimization.
The parameter $\vw_1$ that controls the first natural parameter through $\eta_1(x) = \vpsi(x)\transpose \vw_1$ can be optimized in closed-form.
We define $\mPsi = [\vpsi(x_1), \ldots, \vpsi(x_\SN)]\transpose \in \R^{\SN \times D}$ as the design matrix.
Differentiating the log-likelihood with respect to $\vw_1$, we have
\begin{equation}
    \nabla_{\vw_1} \log p(\vy , \vw| \vx) = \mPsi\transpose \left(\vy - \valpha \hadamard \mPsi \vw_1\right) - \delta \vw_1,
\end{equation}
where $\valpha = -[\tfrac{1}{2\eta_2(x_1)}, \ldots, \tfrac{1}{2\eta_2(x_\SN)}]\transpose$ denotes variances for each data point.
Therefore, this is easily recognized as a weighted least-squares problem that can be solved as
\begin{equation}
    \vw_1 \leftarrow (\mPsi\transpose \diag(\valpha) \mPsi + \delta \mI)\inv \mPsi\transpose \vy,
\end{equation}
where $\diag(\valpha)$ is a $\SN \times \SN$ diagonal matrix with entries $\valpha$.
While the objective is also concave in $\vw_2$ that controls the second natural parameter $\eta_2(x)$, there is no closed-form solution available.
Therefore, we optimize this part of the objective using the L-BFGS optimizer~\citep{byrd1995limited} with the additional constraint that $\vw_2 \geq 0$.
We then fit the model alternatingly, similar to FGLS, by first using the closed-form solution for $\vw_1$ and then optimizing the second parameter $\vw_2$ for multiple steps using L-BFGS. 
We iterate until the objective value does not improve anymore.

\subsection{Neural Network Heteroscedastic Regression}
\label{app:het_reg_nn}

In general, it is expected that our model is misspecified or that we do not know the underlying feature maps.
To deal with this case, we use neural networks that can automatically learn the feature maps specifically per problem instead of relying on splines or more traditional feature maps.
In contrast to common implementations of neural networks for heteroscedastic regression, which parameterize the mean and variance, we employ the natural parametrization in line with our model based on feature maps.
With neural network $\vf: \R \times \R^D \rightarrow \R^2$ we have
\begin{equation}
    \eta_1(x) = f_1(x, \vw) \;\; \textrm{and} \;\; \eta_2(x) = - \tfrac{1}{2} \exp (f_2(x, \vw)),
\end{equation}
which uses an additional exponential link function to ensure positivity while maintaining differentiability.
Further, above model is sensible because for a single layer a zero-mean Gaussian prior would cause a prior predictive mode at unit variance and hence match the data after standardization.
To optimize the model, we use the Adam optimizer~\citep{kingma2014adam}.

\section{Theoretical Results}
\label{app:estimation}

\thidentifiability*

\begin{proof}
We follow the proof technique of \cite{zhang:09:post-nonlinear}, i.e., we build upon the linear separability of the logarithm of the joint density of independent random variables. That is, for a set of independent random variables whose joint density is twice differentiable, the Hessian of the logarithm of their density function is diagonal everywhere~\citep{lin:98:theorem}. We first define the joint distribution $p(x,n_Y)$ via the change of variable formula, then derive the Hessian of its logarithm, and lastly, derive an PDE which is necessary to hold such that an inverse model can exist.

We define the change of variables from $\left\{x, n_Y\right\}$ to $\left\{y, n_X\right\}$
\begin{align*}
y &= f\left(x\right) + g\left(x\right)n_Y,\\
n_X &= \left[x-h\left(y\right)\right]/k\left(y\right).
\end{align*}
The according Jacobian matrix amounts to
\begin{equation*}
\begin{pmatrix}
\partial y / \partial x & g\left(x\right) \\
\dfrac{1}{k\left(y\right)} - \partial y / \partial x \dfrac{k\left(y\right)h'\left(y\right) + \left[x-h\left(y\right)\right]k'\left(y\right)}{k\left(y\right)^2} & -g\left(x\right) \dfrac{k\left(y\right)h'\left(y\right) + \left[x-h\left(y\right)\right]k'\left(y\right)}{k\left(y\right)^2}
\end{pmatrix},
\end{equation*}
with absolute determinant $g\left(x\right)/k\left(y\right)$ such that
\begin{equation*}
p\left(x, n_Y\right) = \dfrac{g\left(x\right)}{k\left(y\right)} p\left(y, n_X\right).
\end{equation*}
Under independence it holds 
\begin{align*}
\dfrac{\partial^2}{\partial x \partial n_Y} \log\left(p\left(x, n_Y\right)\right) & = 0 \quad \text{such that} \\
\dfrac{\partial^2}{\partial x \partial n_Y} \log\left(\dfrac{g\left(x\right)}{k\left(y\right)} p\left(y, n_X\right)\right) & =\dfrac{\partial^2}{\partial x \partial n_Y} \left[\nu_1\left(y\right) + \nu_2\left(n_X\right) + \log\left(g\left(x\right)\right) - \log \left(k\left(y\right)\right)\right] = 0.
\end{align*}
Evaluating this quantity and dividing by $\left(\partial y / \partial x\right)\left( \partial y / \partial n_Y\right)$ leads to 
\begin{align}
& \nu_1''\left(y\right) + \nu_1'\left(y\right)\left[\dfrac{g'\left(x\right)}{G\left(x,n_Y\right)}\right]+\nu_2''\left(n_X\right)\left[\dfrac{h'\left(y\right) + k'\left(y\right)n_X}{k\left(y\right)^2} \left[h'\left(y\right) + k'\left(y\right)n_X - \dfrac{g\left(x\right)
}{G\left(x,n_Y\right)} \right]\right]+ \nonumber \\
& \nu_2'\left(n_X\right)\left[\dfrac{2 \left[n_X k'\left(y\right)^2 + h'\left(y\right)k'\left(y\right)\right]}{k\left(y\right)^2}-\dfrac{h''\left(y\right)+n_X k''\left(y\right)}{k\left(y\right)}-\dfrac{h'\left(y\right)g'\left(x\right)+n_X k'\left(y\right)g'\left(x\right)}{k\left(y\right)G\left(x,n_Y\right)}\right. -\\
&\left. \dfrac{g\left(x\right)k'\left(y\right)}{k\left(y\right)^2G\left(x,n_Y\right)}\right] + \dfrac{k'\left(y\right)^2-k\left(y\right)k''\left(y\right)}{k\left(y\right)^2}-\dfrac{g'\left(x\right)k'\left(y\right)}{k\left(y\right)G\left(x,n_Y\right)}=0, \label{eq:PDE}
\end{align}
where
\begin{equation*}
G\left(x,n_Y\right) = \dfrac{\partial y}{\partial x} \dfrac{\partial y}{\partial n_Y} = g\left(x\right)\left[f'\left(x\right) + g'\left(x\right)n_Y\right].
\end{equation*}
Assuming injectivity of $f\left(\cdot\right)$ and positivity of $g\left(\cdot\right)$, $G\left(x,n_Y\right)$ can only be $0$ on a set of measure $0$. Finally, plugging in all the definitions in Eq.~\eqref{eq:PDEalt} and taking the derivatives, one finds that Eq.~\eqref{eq:PDEalt} is identical to Eq.~\eqref{eq:PDE}.
\end{proof}

\lemconcavity*

\begin{proof}
To prove concavity of the model, we show that the Hessian of the loss w.r.t. parameters $\vw_1, \vw_2$ is negative semidefinite.
We write $\veta(x) = [\eta_1(x), \eta_2(x)]\transpose \in \R^2$ for the natural parameter vector and thus have the log-likelihood
\begin{equation}
    \log p(y | x, \veta) = - \frac{1}{2} \log \left( 2 \pi \right) + \veta(x)\transpose \begin{bmatrix} y \\ y^2 \end{bmatrix} + \frac{\eta_1(x)^2}{4 \eta_2(x)} + \frac{1}{2} \log (-2 \eta_2(x)) \; .
    \label{eq:app:log_lik}
\end{equation}
Further, we concatenate both parameters and feature maps so that we have $\vw = [\vw_1\transpose \vw_2\transpose]\transpose \in \R^{2D}$ and $\mPhi(x) \in \R^{2D \times 2}$ is a block-diagonal concatenation of feature maps such that the first column is $\vpsi(x)$ followed by zeros and the second column is $D$ zeros followed by $-\vphi(x)$, i.e.,
\begin{equation}
    \mPhi(x) = 
    \begin{pmatrix}
       \vpsi(x) & \vzero \\
       \vzero & -\vphi(x)
    \end{pmatrix}\,.
\end{equation}
We then have $\veta(x) = \mPhi(x)\transpose \vw$ for our natural parameters.
By the chain rule, the Hessian w.r.t. parameters $\vw$ can be written as
\begin{align}\label{eq:second-derivative-w}
    \nabla_{\vw}^2 \log p(y | x, \vw)
    &= [\nabla_\vw \veta(x)]\transpose [\nabla_\veta^2 \log p(y|x,\veta)] [\nabla_\vw \veta(x)] + [\nabla_\vw^2 \veta(x)]\transpose [\nabla_\veta \log p(y|x,\veta(x))] \\
    &= \mPhi(x) [\nabla_\veta^2 \log p(y|x,\veta)] \mPhi(x)\transpose,
\end{align}
because the second derivative of $\veta(x) = \mPhi(x)\transpose \vw$ w.r.t.~$\vw$ is zero due to linearity thus eliminating the second summand.
Because negative definiteness of $\nabla_\veta^2 \log p(y|x,\veta)$ implies negative semidefiniteness for any matrix $\mPhi(x) \nabla_\veta^2 \log p(y|x,\veta) \mPhi(x)\transpose$, it only remains to show that $\nabla_{\vw}^2 \log p(y | x, \vw)$ is indeed negative definite.
While negative definiteness already follows from the natural parameterisation, the Hessian is given by:
\begin{equation}
    \nabla_\veta^2 \log p(y | x, \veta) =
    \begin{pmatrix}
        \tfrac{1}{2 \eta_2(x)} & - \tfrac{\eta_1(x)}{2 \eta_2(x)^2} \\
        - \tfrac{\eta_1(x)}{2 \eta_2(x)^2} & \tfrac{\eta_1(x)^2}{2\eta_2(x)^3} - \tfrac{1}{2 \eta_2(x)^2}
    \end{pmatrix}.
\end{equation}
To show that this matrix is negative definite, we use the determinant criterion.
This requires that odd upper determinants be negative and even ones positive.
The upper left determinant is negative since $\tfrac{1}{2 \eta_2(x)}$ is negative due to $\eta_2(x) < 0$.
The determinant of the entire matrix is given by $- \tfrac{1}{4 \eta_2(x)^3}$ and is positive due to $\eta_2(x) < 0$.
This shows that the log-likelihood Hessian w.r.t. $\eta$ is negative definite and concludes the proof that the log-likelihood is concave in $\vw$.
\end{proof}

\thconsistency*

\begin{proof}
We first establish that the true parameter $\vw^*$ is identifiable. Since the density $p(y | x, \vw^*)$ is part of the exponential family, the parameter space $\mathbb{R}^D \times \mathbb{R}^D_+$ is a convex set, and for $\SN \ge \SN_0$, $I(\vw) > 0$ for all $\vw \in \mathbb{R}^D \times \mathbb{R}^D_+$ (by Assumption~\ref{as:fisher-information}), it follows that any $\vw \in \mathbb{R}^D \times \mathbb{R}^D_+$ is globally identifiable as proven by \citet{rothenberg:71:identification}[Theorem~3]. %

Since there exists a $\SN \ge \SN_0$, so that $I(\vw) > 0$ for any $\vw \in \mathbb{R}^D \times \mathbb{R}^D_+$, the main condition for Cram{\'e}r's consistency proof is also fulfilled~\citep{cramer:46:methods-stats}. Combined with the smoothness conditions \ref{as:th-smoothness-1}) and \ref{as:th-smoothness-2}), we fulfill all criteria according to~\citet{cramer:46:methods-stats} resp.~\citet{liang:84:conditional-like-consistency}, who layed out the assumptions for conditional log-likelihoods such that the claim holds.
\end{proof}

\consistencymleapproach*

\begin{proof}
From Eq.~\eqref{eq:log-likelihood-nat} and Eq.~\eqref{eq:mi-x-noise-sample}, it follows that for LSNMs, 
\begin{align}
	\ell_{X \to Y}(\hat{\vtheta}_\SN) &= \sum_{\Sn=1}^\SN p_{\fModels}(x_\Sn,y_\Sn) - \SN \cdot \I(\vx, \hat{\vn}_Y | \hat{\vtheta}_\SN) \\
	\ell_{Y \to X}(\hat{\vxi}_\SN) &= \sum_{\Sn=1}^\SN p_{\fModels}(x_\Sn,y_\Sn) - \SN \cdot \I(\vy, \hat{\vn}_X | \hat{\vxi}_\SN) \; .
\end{align}
Thus, it suffices to show that 
\begin{equation}
 \lim_{T \to \infty}  \I(\vx, \hat{\vn}_Y | \hat{\vtheta}_\SN) - \I(\vy, \hat{\vn}_X | \hat{\vxi}_\SN) \le 0 \; ,
\end{equation}
with equality, if and only if, a backward model parameterized by $\vxi$ exists according to Theorem~\ref{th:identifiability-autoregressive-gaussian}, and $\ell_{Y \to X}(\hat{\vxi}_\SN)$ converges to $\ell_{Y \to X}(\vxi)$ as $\SN \to \infty$. By consistency of $\hat{\vtheta}_\SN$, $\lim_{\SN \to \infty} I(\vx, \hat{\vn}_Y | \hat{\vtheta}_\SN) = 0$~\citep{zhang:15:loglikelihood}. Further, if and only if a backward model exists s.t.~$Y \Indep N_X$, the mutual information in the backward direction is zero. As a consequence, the empirical estimator approaches zero, if and only if a backward model exist and $\ell_{Y \to X}(\hat{\vxi}_\SN)$ approaches $\ell_{Y \to X}(\vxi)$ as $\SN \to \infty$. Otherwise $\lim_{T \to \infty} \I(\vy, \hat{\vn}_X | \hat{\vxi}_\SN) > 0$.
\end{proof}

\ifalternativeversion
In the following, we state the theoretical result by \citet{khemakhem:21:autoregressive} on Gaussian LSNMs. Note that we slightly changed the theorem as the original version has a typo in the definition of $g$ and $k$.

\begin{theorem}[\citet{khemakhem:21:autoregressive}]
\label{th:identifiability-autoregressive-gaussian}
Assume the data follows the model in Def.~\ref{def:lsnm} with $N_Y$ standard Gaussian, $N_Y \sim \mathcal{N}(0,1)$. If a backward model exists, i.e.
\begin{equation}
    X = h(Y) + k(Y)N_X
\end{equation}
where $N_X \sim \mathcal{N}(0,1)$, $N_X \Indep Y$ and $k > 0$, then one of the following scenarios must hold:
\begin{enumerate}
    \item $(g,f) = \left( \frac{1}{\sqrt{Q}}, \frac{P}{Q} \right)$ and $(k,h) = \left( \frac{1}{\sqrt{Q'}}, \frac{P'}{Q'} \right)$ where $Q,Q'$ are polynomials of degree two, $Q,Q' > 0$, $P,P'$ are polynomials of degree two or less, and $p_X,p_Y$ are strictly log-mix-rational-log. In particular, $\lim_{-\infty} g = \lim_{+\infty} g = 0^+$, $\lim_{-\infty} f = \lim_{+\infty} f < \infty$, similarly so for $k,h$, and $f,g,h,k$ are not invertible.
    \item $g, k$ are constant, $f,g$ are linear and $p_X,p_Y$ are Gaussian densities.
\end{enumerate}
\end{theorem}
\fi

\section{Societal Impact}
Causal inference in general has the potential to greatly improve the reliability of learning systems by uncovering causal relationships instead of, potentially spurious, correlations. Our work contributes to this endeavour and we do not anticipate any negative societal impacts because our work is mostly of theoretical nature.

\newpage
\section{Experimental Protocol and Results}
\label{app:experiments}

All considered datasets were standardized to zero mean and unit variance for our methods, and use the recommended pre-processing for the baselines. 
We use the scores of the estimators provided and only if they are strictly greater (or lower for some methods) in the right direction, a given pair is counted as correct.
For the baselines we use the standard settings of the respective papers introducing them.
Except for standardization, the datasets are not adjusted except for the T\"ubingen dataset, where we remove the $6$ multivariate and $3$ discrete pairs.
For the overall performance in Fig.~\ref{fig:overall_perf}, we weight each dataset the same irrespective of the number of pairs.

For our estimators \ourmethodmle and \ourmethodhsic, we use spline feature maps~\citep{eilers1996flexible} of order $5$ and with $25$ knots implemented in \texttt{scikit-learn}~\citep{pedregosa2011scikit}.
We run our maximum likelihood estimator \ourmethodmle for up to $100$ steps or until the likelihood change in a step is below $10^{-6}$.
For the independence-test-based estimator, we compute the residuals $r$ of the estimator and standardize them using the estimate of the standard deviation $s$ with $r = \tfrac{y - \hat{\mu}}{\hat{\sigma}}$ and then test for independence of $r$ and the corresponding input $x$.
We predict based on the $p$-value of the test in both directions.

For the neural network based estimators, NN-\ourmethodmle and NN-\ourmethodhsic, we use a neural network with a single hidden layer of width $100$ and \texttt{TanH} activation function.
The first output of the network is unrestricted and the second output applies an exponential function to ensure a positive output as described in Eq.~\ref{eq:nn_output} and App.~\ref{app:het_reg_nn}. 
The parameters are optimized by full-batch gradient-descent on the log-likelihood using \texttt{Adam}~\citep{kingma2014adam} with initial learning rate $10^{-2}$ that is decayed to $10^{-6}$ using a cosine learning rate schedule for $5 \, 000$ steps.

All numbers displayed in the main text figures are organized below in Tables~\ref{app:tab:accuracy} and \ref{app:tab:audrc} with additional baselines IGCI~\citep{janzing:12:igci} and CGNN~\citep{goudet:18:learning}.

\begin{table}[h!]
\begin{center}
\begin{tabular}{l|rrrrr|rrrrr|rrr}
\toprule
{} &            AN &           ANs &            LS &           LSs &           MNU &          SIM &         SIMc &        SIMln &         SIMG &          Tue &          Cha &          Net &        Multi \\
\midrule
NN-$\textsc{Loci}_\textrm{M}$ &  \textbf{100} &  \textbf{100} &  \textbf{100} &  \textbf{100} &  \textbf{100} &           48 &           50 &           79 &           78 &           57 &           43 &           76 &           72 \\
NN-$\textsc{Loci}_\textrm{H}$ &  \textbf{100} &  \textbf{100} &            95 &            89 &  \textbf{100} &  \textbf{79} &  \textbf{83} &           72 &           78 &           60 &  \textbf{72} &  \textbf{87} &           78 \\
$\textsc{Loci}_\textrm{M}$    &            99 &            98 &            94 &            94 &            93 &           52 &           48 &           77 &           74 &           52 &           46 &           75 &           66 \\
$\textsc{Loci}_\textrm{H}$    &            99 &            98 &            85 &            53 &            90 &           75 &           76 &           73 &  \textbf{81} &           56 &           70 &           84 &           66 \\
GRCI                          &  \textbf{100} &            94 &            98 &            87 &            88 &           77 &           77 &           77 &           70 &  \textbf{82} &           70 &           85 &           77 \\
QCCD                          &  \textbf{100} &            82 &  \textbf{100} &            96 &            99 &           62 &           72 &           80 &           64 &           77 &           54 &           80 &           51 \\
HECI                          &            98 &            55 &            92 &            55 &            33 &           49 &           55 &           65 &           56 &           71 &           57 &           72 &           91 \\
CGNN                          &            99 &            90 &            98 &            90 &            97 &           48 &           57 &           71 &           80 &           66 &           61 &           69 &           71 \\
IGCI                          &            20 &            35 &            46 &            34 &            11 &           37 &           45 &           51 &           53 &           68 &           55 &           55 &  \textbf{92} \\
CAM                           &  \textbf{100} &  \textbf{100} &  \textbf{100} &            53 &            86 &           57 &           60 &  \textbf{87} &  \textbf{81} &           58 &           47 &           78 &           35 \\
RESIT                         &  \textbf{100} &  \textbf{100} &            61 &             6 &             2 &           77 &           82 &  \textbf{87} &           78 &           57 &  \textbf{72} &           78 &           37 \\
\bottomrule
\end{tabular}
\end{center}
\vspace{-1em}
\caption{Accuracies of all methods on all benchmark data sets considered. These data were used for the barplots in the main text.}
\label{app:tab:accuracy}
\end{table}

\begin{table}[h!] 
\begin{center}
\begin{tabular}{l|rrrrr|rrrrr|rrr}
\toprule
{} &            AN &           ANs &            LS &           LSs &           MNU &          SIM &         SIMc &        SIMln &         SIMG &          Tue &          Cha &          Net &        Multi \\
\midrule
NN-$\textsc{Loci}_\textrm{M}$ &  \textbf{100} &  \textbf{100} &  \textbf{100} &  \textbf{100} &  \textbf{100} &           60 &           63 &  \textbf{95} &           89 &           66 &           47 &           86 &           93 \\
NN-$\textsc{Loci}_\textrm{H}$ &  \textbf{100} &  \textbf{100} &            99 &            97 &  \textbf{100} &           89 &  \textbf{93} &           86 &  \textbf{93} &           56 &           71 &  \textbf{97} &           77 \\
$\textsc{Loci}_\textrm{M}$    &            98 &            95 &            88 &            86 &            90 &           68 &           56 &           90 &           87 &           45 &           55 &           84 &           75 \\
$\textsc{Loci}_\textrm{H}$    &  \textbf{100} &            96 &            92 &            54 &            92 &           84 &           85 &           88 &           91 &           47 &           68 &           93 &           51 \\
GRCI                          &  \textbf{100} &  \textbf{100} &  \textbf{100} &            95 &            97 &  \textbf{90} &           92 &           92 &           88 &           73 &           71 &           96 &           74 \\
QCCD                          &  \textbf{100} &            91 &  \textbf{100} &  \textbf{100} &  \textbf{100} &           71 &           83 &           92 &           76 &  \textbf{84} &           61 &           94 &           63 \\
HECI                          &  \textbf{100} &            63 &            99 &            72 &            20 &           59 &           64 &           86 &           73 &           78 &           57 &           84 &  \textbf{99} \\
CGNN                          &  \textbf{100} &            99 &  \textbf{100} &            98 &  \textbf{100} &           59 &           66 &           89 &  \textbf{93} &           77 &           63 &           81 &           91 \\
IGCI                          &            18 &            35 &            60 &            49 &             1 &           34 &           41 &           51 &           63 &           74 &           58 &           61 &  \textbf{99} \\
CAM                           &  \textbf{100} &  \textbf{100} &  \textbf{100} &            36 &            76 &           68 &           69 &           87 &           88 &           70 &           41 &           85 &           40 \\
RESIT                         &  \textbf{100} &  \textbf{100} &            69 &             4 &             0 &           75 &           84 &           83 &           71 &           71 &  \textbf{83} &           81 &           68 \\
\bottomrule
\end{tabular}
\end{center}
\vspace{-1em}
\caption{Area under the decision rate curve (AUDRC) of all the estimators used on all benchmark data sets.
This data was used for the overall performance displayed in Fig.~\ref{fig:overall_perf}.}
\label{app:tab:audrc}
\end{table}

\clearpage
\section{Performance Comparison of LSNM Estimators}
\label{app:estimator_benchmark}

Our LSNM estimator that can achieve consistency under the right assumptions is different to the standard method commonly used for such estimation. 
Because we use the natural exponential family parametrization of the Gaussian log-likelihood, we have concavity in the linear model parameters while different common formulations do not have this property.
The de-facto standard is iterative feasible generalized least-squares (IFGLS) estimator that regresses to the mean and the squared residuals in an alternating fashion~\citep{harvey1976estimating, cawley2004heteroscedastic}.
Such estimators are mostly developed and used in the context of econometrics~\citep{amemiya1985advanced, wooldridge2015introductory} and, to the best of our knowledge, an approach based on natural parameters of the Gaussian location-scale noise model has only been proposed in the context of Gaussian process regression~\citep{le:05:heteroscedastic}.
In the following, we empirically compare our method to IFGLS.

In Fig.~\ref{app:fig:estimator_sqrt} and \ref{app:fig:estimator_lin}, we compare both estimators on a simple sinusoidal example with $x \sim \mathcal{U}[-4\pi, 4\pi]$ and $y\sim \mathcal{N}(\sin (x), 0.1 (4\pi - |x|) + 0.2)$ so that the observation noise is small at the borders and large in the middle of the problem domain.
We increase the number of samples $T$ from $100$ to $10000$ and report the KL-divergence from the estimated predictive density $q_\textrm{est}$ to the true density $p$ averaged over a grid of $10^4$ points from the left to the right boundary of the dataset ($-4\pi$ to $4\pi$).
We repeat this procedure for $100$ times and, due to numerical outliers, show the median performance of both estimators.
In Fig.~\ref{app:fig:estimator_sqrt}, we use the common heuristic of setting the number of knots of the spline features to $\sqrt{T}$ and in \ref{app:fig:estimator_lin}, we set it to $\tfrac{T}{10}$ leading to overly complex features that make fitting harder.

Overall, we observe that the proposed estimator is more robust for smaller datasets for both strategies of selecting the number of features.
Further, IFGLS fails catastrophically for too many complex features due to overfitting of the mean and thus iteratively also the variance.
Our estimator does not seem to suffer from this problem.
We hypothesize that this is due to the jointly concave objective proposed.
However, it is apparent from Fig.~\ref{app:fig:estimator_sqrt} that both estimators behave asymptotically the same.
This shows overall that our proposed estimator performs appropriately and, as can be seen on the right hand side of both figures, can successfully fit mean and variance simultaneously.
We hypothesize that both estimators work equally well given optimal feature maps and leave a thorough investigation of the proposed estimator for future work.

\begin{figure*}[h!]
\centering
\includegraphics{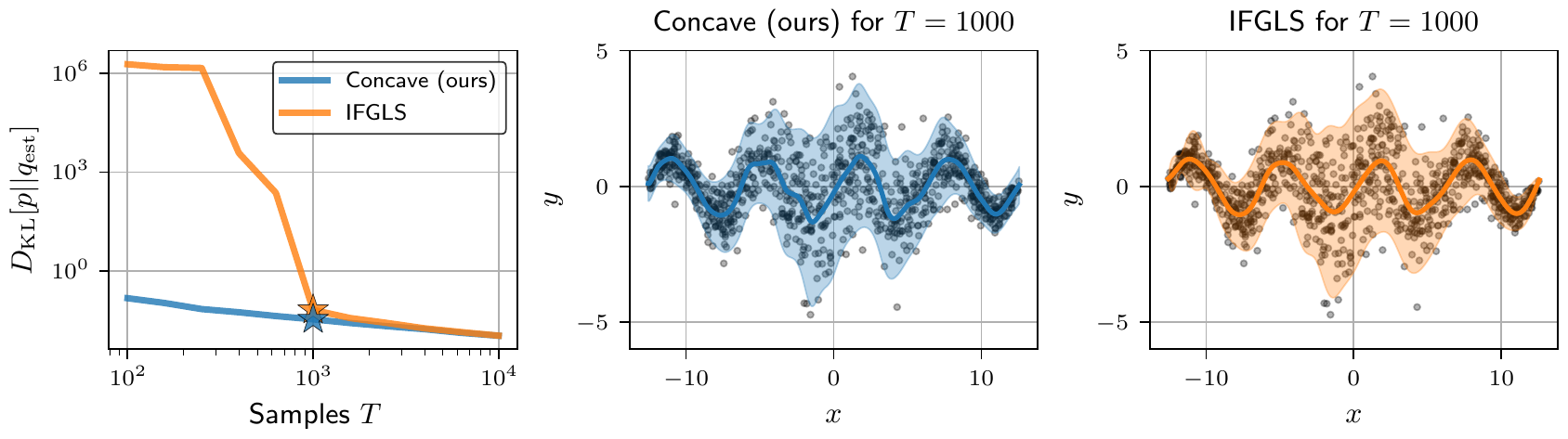}
\vspace{-2.75em}
\caption{Performance of our maximum likelihood estimator using spline features of order $5$ and the common heuristic of $\textrm{knots}=\sqrt{T}$. 
In comparison to iterative feasible generalized least-squares (IFGLS
in 
\protect\tikz[baseline=-0.15ex,inner sep=0pt]{\protect\draw[line width=6pt, C1light] (0,0.05) -- ++(0.35,0)})
, our concave estimator 
(\protect\tikz[baseline=-0.15ex,inner sep=0pt]{\protect\draw[line width=6pt, C2light] (0,0.05) -- ++(0.35,0)})
performs significantly better with fewer samples although both behave asymptotically the same.
The right two figures show the fits at $T=1000$.
}
\vspace{-1em}
\label{app:fig:estimator_sqrt}
\end{figure*}

\begin{figure*}[h!]
\centering
\includegraphics{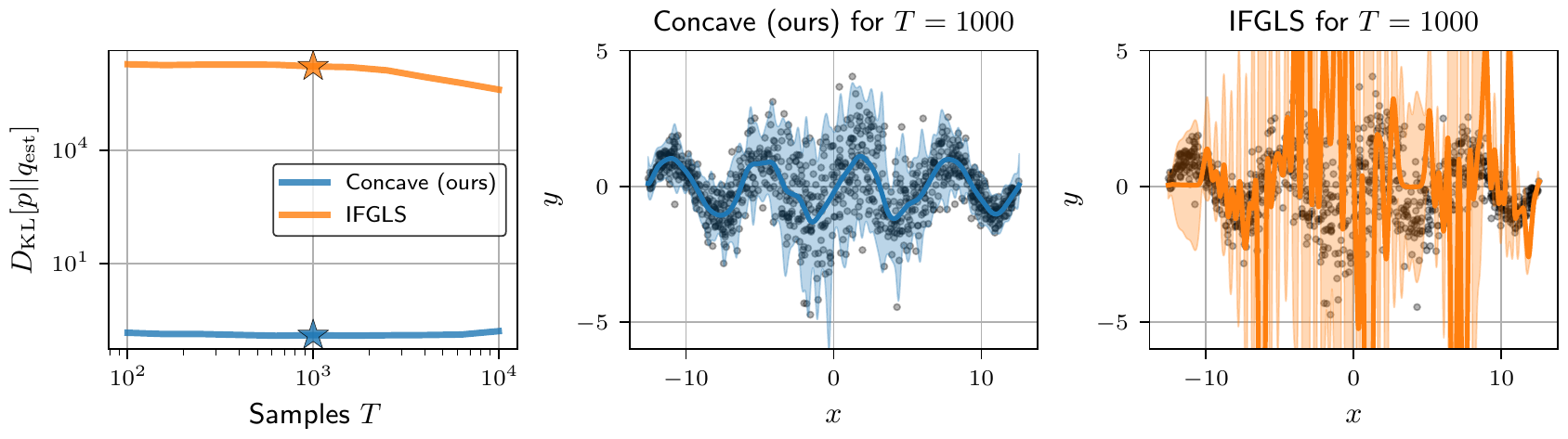}
\vspace{-2.75em}
\caption{Performance of our maximum likelihood estimator using spline features of order $5$ and more features than common with $\textrm{knots}=\tfrac{T}{10}$ with $T$ samples.
Additional complexity makes IFGLS greatly overfit and prevents it from converging while the concave estimator still works somewhat reliably.
On the right, the two fits at $T=1000$ of both methods are shown.
}
\label{app:fig:estimator_lin}
\end{figure*}

\end{document}